\newcommand{\revOne}{black}
\newtheorem{rem}{Remark}
\newtheorem{thm}{Theorem}
\let\oldequation\equation
\let\oldendequation\endequation
\let\oldalign\align
\let\oldendalign\endalign
\renewenvironment{equation}
  {\linenomathNonumbers\oldequation}
  {\oldendequation\endlinenomath}
\renewenvironment{align}
  {\linenomathNonumbers\oldalign}
  {\oldendalign\endlinenomath}
\begin{document}
\let\WriteBookmarks\relax
\def\floatpagepagefraction{1}
\def\textpagefraction{.001}

\newcommand{\titlecmd}{Guaranteed Rejection-free Sampling Method Using Past Behaviours for Motion Planning of Autonomous Systems}

\shorttitle{\titlecmd}
\shortauthors{Enevoldsen and Galeazzi}
\title [mode = title]{\titlecmd}  

\author[1]{Thomas T. Enevoldsen}[
orcid=0000-0001-9855-7066
]
\cormark[1]
\ead{thomas.t.enevoldsen@dk.abb.com}

\affiliation[1]{organization={ABB A/S, Marine and Ports},
            city={Ballerup},
            postcode={DK 2750}, 
            country={Denmark}}

\affiliation[2]{organization={Control, Robotics and Embodied AI Group, DTU Electro, Technical University of Denmark},
            city={Kgs. Lyngby},
            postcode={DK 2800}, 
            country={Denmark}}

\author[2]{Roberto Galeazzi}[
orcid=0000-0002-6047-7922
]
\ead{roga@dtu.dk}

\cortext[1]{Corresponding author}

\begin{abstract}
The paper presents a novel learning-based sampling strategy that guarantees rejection-free sampling of the free space under both biased and approximately uniform conditions, leveraging multivariate kernel densities. Historical data from a given autonomous system is leveraged to estimate a non-parametric probabilistic description of the domain, which also describes the free space where feasible solutions of the motion planning problem are likely to be found. The tuning parameters of the kernel density estimator, the bandwidth and the kernel, are used to alter the description of the free space so that no samples can fall outside the originally defined space.The proposed method is demonstrated in two real-life case studies: An autonomous surface vessel (2D) and an autonomous drone (3D). Two planning problems are solved, showing that the proposed approximately uniform sampling scheme is capable of guaranteeing rejection-free samples of the considered workspace. Furthermore, the effectiveness of the proposed method is statistically validated using Monte Carlo simulations.
\end{abstract}


\begin{keywords}
Autonomous systems \sep
Motion planning \sep
Sampling-based motion planning \sep
Path planning \sep
Sampling strategies
\end{keywords}

\maketitle


\section{Introduction}
The interest in adopting autonomous robotic systems is steadily increasing in several industrial sectors, especially after witnessing the positive impact that robotization has had in large companies to perform mundane, repetitive and dangerous tasks. Companies see a growth potential in the integration of cobots into the manufacturing process; however, they are also concerned about the challenges associated with frequent changes on the production line. In addition, the transport sector is making the first experiments with the introduction of autonomous vehicles for logistics and urban mobility; however, there are open questions about the flexibility of such systems in the face of changes in the operational environment. If autonomous systems had the ability to leverage past experiences accumulated through the successful execution of tasks to plan the execution of similar tasks in a partly new environment, then the barriers to their actual adoption could be lowered. Furthermore, by leveraging past experiences, autonomous systems can plan in such a way that actions are predictable to humans. For autonomous systems such as cars and marine crafts, this will be of utmost importance, as humans are interacting with these systems on both roads and oceans, respectively.

Learning-based motion planners could provide a solution to transfer these experiences. Past states of the autonomous system collected during the completion of a task or experiences of other agents could be used to hone the search for new solutions to the given motion planning problem, within the region containing the past experiences, as long as the task or objective does not change significantly. When alterations in the task or systems workspace occur, the past states could be used to drive the exploration of previously unaccounted regions of the workspace in the vicinity of that encoded within the past experience. 

Sampling-based motion planning (SBMP) is proven to conquer complex motion planning tasks, where the given robotic system is highly constrained by its dynamics and working environment. Given the probabilistic nature of the historical data and SBMP, investigating how to integrate such experiences in the sampling procedure is a natural extension.

\subsection{Related Work}
Since the introduction of Probabilistic Roadmaps (PRM) \citep{kavraki1996probabilistic,kavraki1998analysis,geraerts2004comparative} and Rapidly-exploring Random Trees (RRT) \citep{lavalle1998rapidly,kuffner2000rrt,LaValle2001}, Sampling-based Motion Planning (SBMP) has had a strong grasp on the motion and path planning space. This class of algorithms has a set of key advantages over traditional grid-based algorithms, primarily due to its ability to deal with systems of greater complexity with less computational burden. 
State-of-the-art optimal sampling-based motion planners include RRT* and PRM* ~\citep{karaman2011sampling}, Batch Informed Trees (BIT*)~\citep{gammell2015batch}, Fast Marching Trees~(FMT*) \citep{janson2015fast} and additional variants of the aforementioned algorithms~\citep{choudhury2016regionally,strub2020adaptively,strub2020advanced,klemm2015rrt,yi2015morrf,otte2016rrtx}. By uniformly sampling the state space, the aforementioned SBMPs maintain their probabilistic guarantees and are asymptotically optimal.

However, the choice of sampling technique plays an important role in the convergence speed. The primary objective of alternative sampling schemes is to increase the probability of sampling states which can improve the current solution, compared to wasting sampling effort on states which provide no value. An extensive review of sampling techniques utilised in conjunction with RRT and its variants was performed by \cite{veras2019systematic}. The authors categorised the following sampling objectives: Goal-biased, obstacle-biased, region-based, path-biased, passage-biased, search space reduction and biasing through sampling distributions. Furthermore, \cite{gammell2021asymptotically} provided an updated overview of the current state of SBMP.

The uniform sampling strategy is applicable to all problems and also guarantees that a solution, if it exists, is found. However, for certain problems, the uniform sampling strategy tends to sample states that are infeasible due to collision or system constraints. As an alternative, \cite{abbasi2017hit} investigated planning using knowledge of the free space to form a non-convex region, which was then sampled directly using a hit-and-run sampling scheme. \cite{enevoldsen2021kde} showed that uniform sampling a triangulated representation of a non-convex environment provided a significant increase in sampling value, as the obstacle-to-free space ratio increases, compared to the baseline uniform sampler. Leveraging prior knowledge of the problem is often referred to as importance sampling \citep{lindemann2005current}, which is a class of non-uniform sampling, more specifically, a priori non-uniform sampling. The key idea is that utilising knowledge of the problem will allow for a more rapid convergence to a solution. The application of importance sampling is a general topic within the motion planning literature~\citep{janson2018monte,osa2020multimodal,schmerling2016evaluating}.

Search space reduction was popularised by \cite{gammell2014informed}, which proposed the Informed-RRT*, a method for reducing the search space as the current best found solution improves. This is achieved by uniformly sampling a $n$-dimensional hyperspheroid, which corresponds to bounding the search space by $n$-dimensional symmetric ellipses scaled by the current best path length.
This concept was further iterated by \cite{mandalika2021guided}, who proposed a system that incrementally densifies the internal states of the bounding $n$-dimensional hyperspheroid. \cite{enevoldsen2022iros} presented a generalised extension of the informed sampling strategy, which is tailored for piece-wise linear collision avoidance applications, compared to conventional path planning problems. \cite{enevoldsen2021colregs} proposed an informed uniform sampling strategy that directly encodes the maritime rules-of-the-road (known as the COLREGs) by sampling an elliptical half-annulus. \cite{kunz2016hierarchical} proposed an alternative informed sampling scheme for kinodynamic planning, since the elliptical informed subset is not suitable for systems with kinodynamic constraints. The authors propose a hierarchical rejection sampler which can sample the relevant informed set without explicitly parameterizing it. \cite{yi2018generalizing} proposed sampling routines for generating samples in a generalised informed set, primarily using Markov Chain Monte Carlo. The authors show asymptotic optimality for generally shaped informed spaces. 


Recently, a growing interest in exploring learning methods to drive the sampling strategy has emerged. \cite{iversen2016kernel} proposed a self-learning sampling scheme, where an initial uniform distribution is updated with experiences from previous paths. The update occurs by augmenting the uniform distribution with the new data and using kernel density estimation to bias the uniform search space. \cite{lehner2017repetition,lehner2018repetition} propose a similar idea, however, using Gaussian Mixture Models (GMMs) to bias the sampling towards regions with previous solutions. \cite{chamzas2019using} created a bank of local samplers by decomposing the work space into smaller regions; these local samplers are then customised to a specific problem based on prior knowledge of the task. The authors argue that the data for local samplers have high complexity and therefore represent the local space with GMMs. \cite{baldwin2010non} proposed using a variant of GMM called Infinite GMMs, which allows the authors to learn typical GMM tuning parameters based on the expert data. \cite{dong2020knowledge} demonstrated the use of a GMM sampling bias routine for automated parking, where the underlying data is generated from past parking scenarios carried out by experts. \cite{zhang2018learning} proposed modelling a rejection sampling technique using Markov Decision Processes, so that an offline policy can be modelled for environments that are similar. 

For probabilistic roadmaps, \cite{hsu1997path} propose using expansive configuration spaces, where the algorithm attempts to only sample areas of the configuration space of most relevance to the query at hand. \cite{kim2018dancing} combines sampling-based and optimisation-based planning, while approximating the configuration space. \cite{bialkowski2013free} utilises estimates of observed samples from the obstacle-free space to generate new samples. 

A two-stage approach is proposed by \cite{arslan2015machine}, where each sample is classified as collision-free or not. Kernel density estimation is used to create a collection of collision free samples, which then generates new samples with a greater probability of also being collision free. The second stage evaluates the potential of the newly generated sample, to determine whether or not it is capable of providing value. \cite{joshi2019non} further iterates on this by restricting the search space to the $L^2$ informed subset (as proposed by \cite{gammell2014informed}), while leveraging the information captured by the classifier. 

\cite{ichter2018learning} presented a conditional variational autoencoder, which is trained on past robot experiences. Non-uniform samples biased towards the past experiences are generated using the latent layer, narrowing the search for new paths to the area previously explored. Non-uniform and uniform sampling schemes are combined to retain the optimality guarantees of the given SBMP algorithm. The work was later extended into multiple networks capable of solving the entire SBMP problem~\citep{ichter2019robot}. \cite{wang2020neural} proposed Neural RRT*, where a Convolutional Neural Network (CNN) is trained with previously successful paths, and then used to generate biased samples in the neighbourhood of the provided input data. \cite{li2021mpc} proposes an imitation learning-based kinodynamic motion planner, where deep neural networks are combined with Model Predictive Control. The method is trained on historical data, to compute paths that directly adhere to the kinodynamic constraints of the given system.

\subsection{Novelty and Contribution}                     
This paper proposes a novel learning-based sampling strategy for motion planning of autonomous systems. The method takes advantage of past experiences from prior motions to efficiently find new solutions to the motion planning problem in the presence of changes \textcolor{\revOne}{to the static obstacle configuration within the} workspace. To achieve this, kernel density estimation with a finite support kernel is adopted to generate a non-parametric probabilistic description of regions of the free space where feasible solutions of the motion planning problem are likely to be found, in the neighbourhood of past data. The bandwidth of the multivariate probability density function is exploited to redefine the free space by enlarging obstacle regions. \textcolor{\revOne}{Sampling the KDE in this new restricted space, with the modified obstacle regions, ensures that the generated samples always fall within the originally free space}. The paper also shows how the estimated kernel density can be exploited to obtain weights for performing importance sampling in the neighbourhood of past motions, for both biased and approximately uniform sampling of the free space, allowing the motion planner to both improve the current solution, but also explore nearby regions to the estimated one in relation to a new planning problem. Both sampling strategies are guaranteed to be rejection-free by construction. \textcolor{\revOne}{Exploiting the kernel density estimate of the free space to generate guaranteed rejection-free approximately uniform samples within the domain of the historical data, is what sets the proposed sampling scheme apart from conventional methods within learning-based samplers and SBMPs. The existing work tends towards biased sampling, in order to generate samples that allow the SBMP to converge to the neighbourhood surrounding past solutions, whereas the proposed method, in addition to biased sampling, emphasises exploration of the entire data, in an approximately uniform and guaranteed rejection-free manner.}

The presented sampling strategy is verified in two case studies that address motion planning for an autonomous ship sailing in coastal waters and for an aerial drone performing a complex inspection task in a confined space. \textcolor{\revOne}{The proposed method is compared to a baseline uniform sampling algorithm, as both focus on uniformity, rather than biased sampling.}

\section{Preliminaries}
The following sections provide some preliminary and foundational details regarding the workspace formulation, the SBMP problem and baseline sampling strategies.
\subsection{The Workspace}
Consider some autonomous (robotic) system, for example, an unmanned aerial vehicle (UAV) or an autonomous surface vehicle (ASV), that operates in some workspace $\mathcal{W}$ that is a subset of the Euclidean space $\mathbb{R}^n$.
Let $\mathcal{A}$ be some instance of the aforementioned autonomous system, then $\mathbf{x}_i$ is an instance of coordinates that determines the current state of the system. 
Obstacles present in the workspace of the autonomous system can also be mapped in the space $\mathcal{W}$. Let $\mathcal{O}$ be an obstacle in the workspace $\mathcal{W}$, then the obstacle in the workspace $\mathcal{W}_{\mathrm{obs}} \subseteq \mathcal{W}$, is defined as $\mathcal{W}_{\mathrm{obs}} \triangleq\left\{\mathbf{x}\in\mathcal{W} | \mathcal{A}(\mathbf{x}) \cap \mathcal{O} \ne \emptyset\right\}$, i.e. the set of all states in which the autonomous system collides with the obstacle. The complementary set of the obstacle space is called the free space, that is $\mathcal{W}_{\mathrm{free}} = \mathcal{W} \setminus \mathcal{W}_{\mathrm{obs}}$.

\subsection{Sampling-based Motion Planning}
\textcolor{\revOne}{The intended use of the proposed method is for computing low-cost feasible solutions in connection with the solution of the optimal sampling-based motion planning problem, defined similarly to \cite{gammell2014informed}.} 

Consider the state space $\mathcal{X}$, consisting of two subsets, namely $\mathcal{X}_{\text{free}}$ and $\mathcal{X}_{\text {obs}}$, with $\mathcal{X}_{\text{free}}=\mathcal{X} \backslash \mathcal{X}_{\text{obs}}$. The space $\mathcal{X}_{\text{free}}$ contains all states that are feasible with respect to the given system and its operating environment. Let $\mathbf{x}_{\text{start}} \in \mathcal{X}_{\text{free}}$ be the initial state at the initial time $t=t_0$ and $\mathbf{x}_{\text{goal}} \in \mathcal{X}_{\text{free}}$ be the desired final state. 

Let $\sigma:[0,1] \mapsto \mathcal{X}$ be a sequence of states that constitute a found path, and $\Sigma$ be the set of all feasible and non-trivial paths. The objective is to compute the optimal path $\sigma^{*}$, which minimises a cost function $c(\cdot)$ while connecting $\mathbf{x}_{\text{start}}$ to $\mathbf{x}_{\text{goal}}$ through $\mathcal{X}_{\text{free}}$, i.e.,  
\begin{equation}
	\begin{split}
		\sigma^{*}=\underset{\sigma \in \Sigma}{\arg \min }\left\{c(\sigma) \mid \right. \sigma(0)=\mathbf{x}_{\text {start }},\, &\sigma(1)=\mathbf{x}_{\text {goal }}, \\
		\forall s \in[0,1],\, &\sigma(s) \in \left. \mathcal{X}_{\text {free }}\right\}.
	\end{split}
\end{equation}
For the remainder of the paper, it is assumed that the above-mentioned state space quantities are equal to their workspace counterparts, i.e. $\mathcal{W} = \mathcal{X}$, $\mathcal{W}_{\text{free}} = \mathcal{X}_{\text{free}}$ and $\mathcal{W}_{\text{obs}} = \mathcal{X}_{\text{obs}}$.

\subsection{Uniform Sampling Strategies}
Several sampling techniques exist to obtain new nodes for the exploration of a given state space. \emph{Uniform random sampling} is the simplest strategy to achieve uniform exploration of the space and is based on the random selection of values for each degree of freedom present in the state $x\in\mathcal{X}$. Deterministic methods also exist, where sampling is driven by a \emph{low-dispersion objective} or a \emph{low-discrepancy objective}. The former leads to the use of a grid whose resolution changes so that samples are placed to minimise the size of the uncovered areas. The latter addresses the shortcomings that arise from having grids that are aligned with the coordinate axes of the space. Among these sampling strategies, we find the Halton sequence (and its variants) and lattices.

\begin{figure*}[htb]
	\centering
	\begin{subfigure}[b]{0.95\columnwidth}
		\centering
		\includegraphics[width=\textwidth]{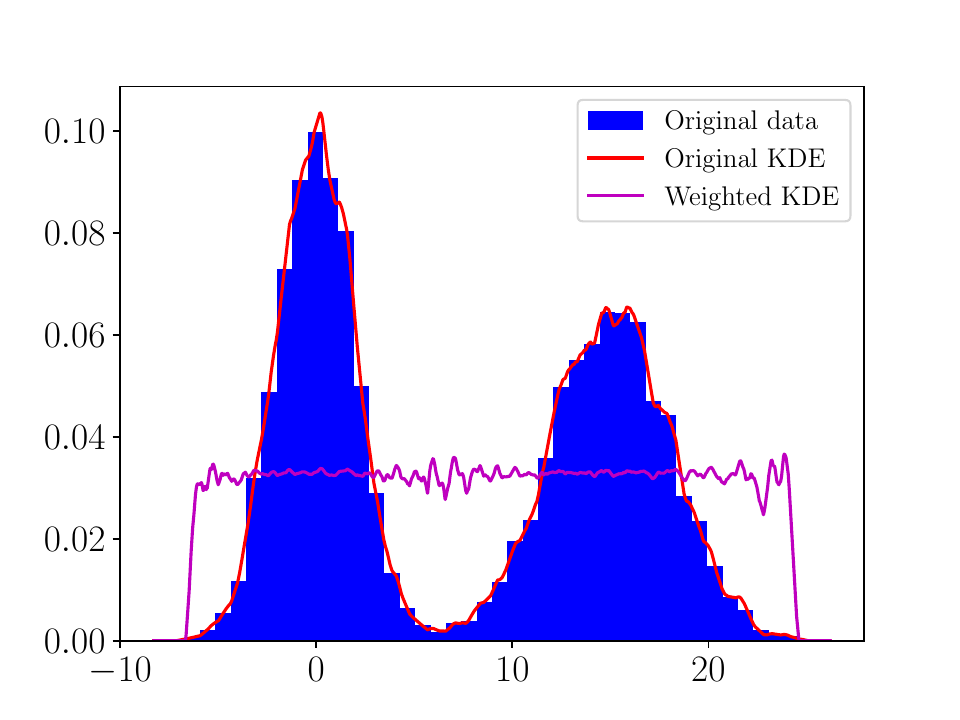}
		\caption{Resulting weighted KDE}
		\label{fig:kde_reweight_1d_toy}
	\end{subfigure}
	\begin{subfigure}[b]{0.99\columnwidth}
		\centering
		\includegraphics[width=\textwidth]{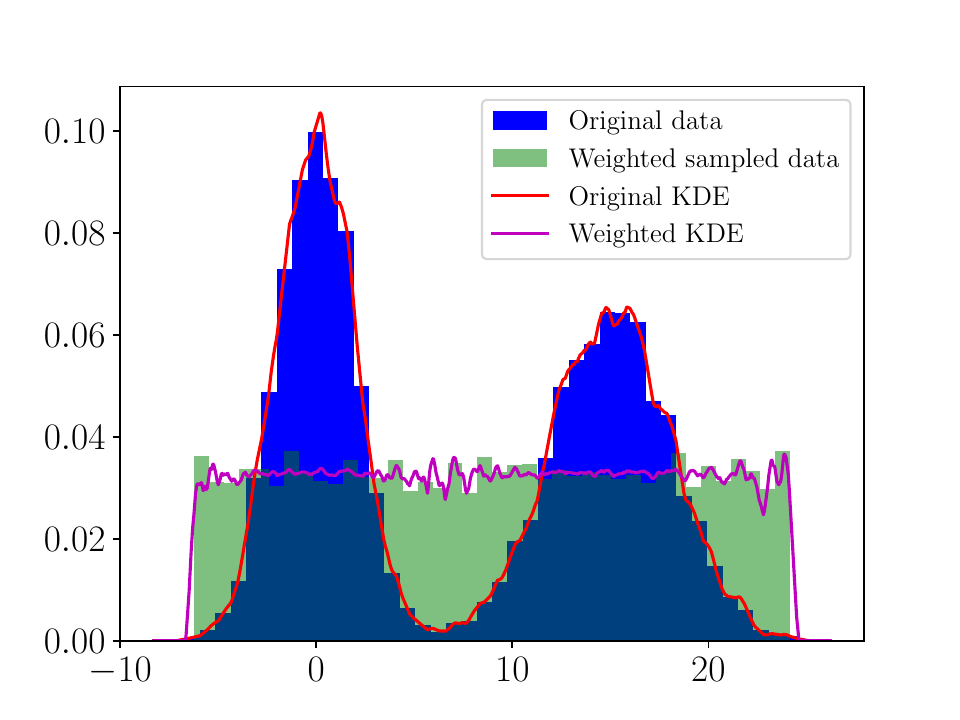}
		\caption{Approximate uniform sampling}
		\label{fig:kde_reweight_resample_1d_toy}
	\end{subfigure}
	\caption{One dimensional resampling of the weighted KDE, such that uniform samples of the KDE domain are generated.}\label{fig:kde_1d_toy}
\end{figure*}

\section{Guaranteed Rejection-free Sampling of Non-parametric Spaces}\label{sec:propsed_method}
Consider the autonomous system $\mathcal{A}$ performing tasks in the free space $\mathcal{W}_{\mathrm{free}}$ over an arbitrary large time horizon $T$. In the event of changes of the free space due to e.g., the introduction of new obstacles or operational boundaries, it is of interest to produce new motion plans by leveraging the information carried by the historical paths $\sigma_i$ traversed by $\mathcal{A}$ over the period $T$. We propose to leverage the available data from past experiences in order to compute a non-parametric probabilistic representation of the free space that describes the space in which new solutions may exist. Such a non-parametric description can be utilized twofold. Directly sampling the non-parametric distribution provides a biased sampling strategy, aiding in the computation of motion plans similar to those encoded within the data. The non-parametric distribution can also be used to approximately uniformly explore nearby regions of the free space to that traversed by the paths $\sigma_i$, yet remaining within the defined free space.

\subsection{Kernel Density Estimation}
Multivariate kernel density estimation (KDE) is a non-parametric method to estimate an unknown multivariate probability density function $f(\mathbf{x})$ based on a finite data set containing realisations of the multivariate random variable described by $f(\mathbf{x})$. Specifically, the kernel density estimator operates on a set of $n$ data vectors each of which is an identically distributed $p$-variate random variable drawn from the same and unknown distribution $f(\mathbf{x})$. 

Let $X = \left\{\mathbf{x}_i \in \mathbb{R}^p | \mathbf{x}_i \sim f, \, i= 1,\ldots, n\right\}$ be the available data set; then the general form of the $p$-variate kernel density estimator is given by~\cite{gramacki2018nonparametric} 
\begin{equation}
	\hat{f}_{X}(\mathbf{x},\mathbf{H}) = \frac{1}{n}\sum_{i = 1}^n|\mathbf{H}|^{-1/2}K(\mathbf{H}^{-1/2}(\mathbf{x} - \mathbf{x}_i))
\end{equation}
where $\mathbf{x}_i = [x_{i1}, x_{i2}, \ldots, x_{ip}]^\top \in X$, $\mathbf{x} = [x_1, x_2, \ldots, x_p]^\top \in \mathbb{R}^p$ is an arbitrary element, $\mathbf{H} = \mathbf{H}^\top > 0$ is the non-random $p\times p$ bandwidth matrix, and $K(\cdot)$ is the kernel function. 

The kernel and the bandwidth are the tuning parameters; however, as the size of the data set increases, the importance of choosing the bandwidth outweighs the particular choice of the kernel~\cite[Section 6.2.3]{scott2015multivariate}.
In the case of 1D KDEs, if the underlying distribution is unimodal or exhibits normal features, then Silverman's rule or Scott's rule \citep{silverman1982algorithm} can be applied to compute the bandwidth. For 1D instances where the data has multiple modes, the Improved Sheather-Jones algorithm serves as a plug-in bandwidth selector \citep{botev2010kernel}. When multivariate kernel density estimation is performed, such rule-based approaches do not apply. However, data-driven methods exist to compute optimal kernel functions and bandwidth matrices, as shown in \cite{o2016fast,gramacki2017fft}.

\subsection{Generating Samples From the Estimator \lowercase{$\hat{f}$}}\label{sec:samplekde}
The generation of samples from parametric distributions is achieved through inverse transform sampling, in order to uniformly create samples belonging to the corresponding probability density function. Performing such an inversion of a KDE poses several challenges. However, by construction, a KDE $\hat{f}_{X}(\mathbf{x},\mathbf{H})$ consists of a mixture of the kernel function $K(\cdot)$. This means that the KDE itself can be reconstructed by sampling the data used to generate it, biased by the chosen kernel function and its parameters.

One can generate $m$ samples from the estimated PDF $\hat{f}_{X}(\mathbf{x},\mathbf{H})$, given the data set $X = \left\{\mathbf{x}_{1}, \mathbf{x}_{2},\dots, \mathbf{x}_{n}\right\}$ used to compute the estimate $\hat{f}$, generate $m$ indices $(k_1, k_2, \dots, k_m)$ from the discrete uniform density $U(1,2,\dots, n)$ in order to uniformly select data from the original data set.  Each selected point is then biased by a sample $\mathbf{t}_i$ generated from the chosen scaled multivariate kernel~\citep{devroye1985nonparametric, scott2015multivariate}
\begin{equation}\label{eq:generate_sample}
	\mathbf{s}_i = \mathbf{x}_{k_i} + \mathbf{t}_i, \quad i = 1, \dots, m .
\end{equation}
This procedure naturally leads to generating samples that are biased towards regions of higher density, and as the number of sampled points $m$ tends toward infinity, the samples represent a densified estimate of the non-parametric distribution of past states. 

For sampling-based motion planning, it may instead be desired that the generated samples are uniformly distributed over the domain of the KDE. This can be achieved by computing the densities for each data point in the set $X$,
\begin{equation}\label{eq:extract_weights}
	\omega_i = \hat{f}_{X}(\mathbf{x}_i,\mathbf{H}), \quad i = 1, \ldots, j
\end{equation}
and using the reciprocal $1/\omega_i$ to weight the selection of the indices $(k_1, k_2, \dots, k_m)$. Fig.~\ref{fig:kde_1d_toy} shows a one-dimensional implementation of such procedure, where Fig.~\ref{fig:kde_reweight_1d_toy} illustrates the estimated KDE and the computation of the weights, while Fig.~\ref{fig:kde_reweight_resample_1d_toy} shows the reweighted sampling of the KDE domain compared to the original KDE.
\begin{rem}
	There are complexities in generating the kernel sample $\mathbf{t}_i$, depending on the chosen kernel function. The most common case is a KDE using the Gaussian kernel, where sampling occurs by selecting the data points as described above and subsequently biasing each sample by a zero-mean $p$-variate normal distribution with the covariance described in terms of the bandwidth.
\end{rem}
\begin{rem}
	To ensure that the given sampling-based motion planner maintains its asymptotic optimality, the proposed method should be combined with uniform sampling of the entire space~\citep{ichter2018learning}. This results in $\lambda m$ samples drawn from the proposed scheme and $(1 - \lambda)m$ samples from a uniform sampler, where $0 < \lambda < 1$ is a tuning parameter based on the available data set and the problem at hand.
\end{rem}

\begin{figure}
	\centering
	\includegraphics[width=0.85\columnwidth]{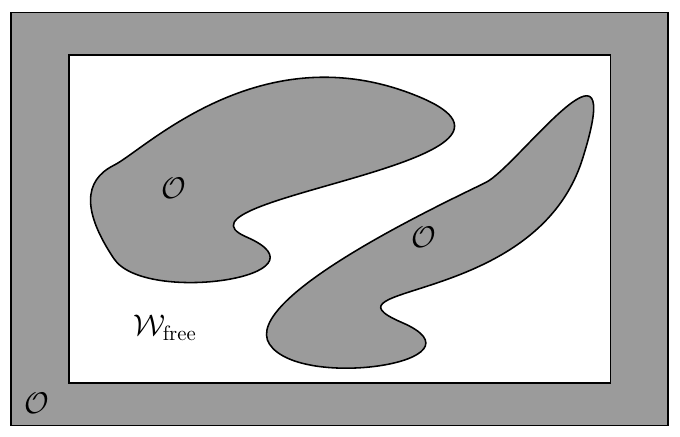}
	\caption{Example of closed and bounded free space $\mathcal{W}_\mathrm{free}$.}
	\label{fig:theorem_figure}
\end{figure}

\subsection{Guaranteed Rejection-free Sampling Scheme}\label{sec:truncKDE}
Consider the autonomous system $\mathcal{A}$ operating in space $\mathcal{W} \subseteq \mathbb{R}^p$ in the presence of a set of obstacles $\mathcal{O}_0$. Then the free space is defined as
\begin{equation}
	\mathcal{W}_{\mathrm{free}}^0 = \left\{ \mathbf{x} \in \mathcal{W} \, | \, \mathcal{A}(\mathbf{x}) \cap \mathcal{O}_0 = \emptyset \right\}. 
\end{equation}
Depending on the configuration of the obstacles, the free space $\mathcal{W}_{\mathrm{free}}^0$ can be either an open and unbounded set, or a closed and bounded. Fig.~\ref{fig:theorem_figure} shows an exemplification of the latter. The following theorem is valid in both cases.

\begin{thm}
	Let $X$ be the set of states $\mathbf{x}_i\in\mathcal{W}_\mathrm{free}^0$ that an agent $\mathcal{A}$ has assumed during a time period $T = [t_0,t_1] \in \mathbb{R}$, $t_1 > t_0 \geq 0$, and $f(\mathbf{x})$ the unknown spatial distribution of such states over the free space $\mathcal{W}_\mathrm{free}^0$. Assume that at time $t_2 > t_1$ the free space $\mathcal{W}_\mathrm{free}^0$ changes to $\mathcal{W}_\mathrm{free}^1$, where $\mathcal{W}_\mathrm{free}^1 \cap \mathcal{W}_\mathrm{free}^0 \neq \emptyset$. Then the rejection-free sampling of $\mathcal{W}_\mathrm{free}^1$ is guaranteed through the sampling of $\hat{f}_X(\mathbf{x},\mathbf{H})$ on the set $\bar{X} = \{\mathbf{x} \in \mathcal{W}_{\mathrm{free}}^2 \cap X\}\subseteq X$, where $\hat{f}_X(\mathbf{x},\mathbf{H})$ is a kernel density estimator of $f(\mathbf{x})$ with finite support kernel, and $\mathcal{W}_\mathrm{free}^2\subset\mathcal{W}_\mathrm{free}^1$, \textcolor{\revOne}{as long as the static obstacle configuration in $\mathcal{W}_\mathrm{free}^1$ remains constant whilst generating the KDE $\hat{f}_X(\mathbf{x},\mathbf{H})$, solving the SBMP, and executing the planned motion.}
\end{thm}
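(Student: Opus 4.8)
The plan is to exploit the finite support of the kernel: once it is scaled by the bandwidth matrix, a sample produced by \eqref{eq:generate_sample} can be displaced from its parent data point only by a vector lying in a fixed compact ellipsoid, so it suffices to restrict the parents to the part of $\mathcal{W}_\mathrm{free}^1$ that is at least ``one ellipsoid'' away from the obstacles and the boundary. That region is what becomes $\mathcal{W}_\mathrm{free}^2$.

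First I would make the support explicit. A finite-support kernel satisfies $K(\mathbf{u})=0$ for $\|\mathbf{u}\|>1$ (normalise the radius to $1$; e.g.\ the Epanechnikov kernel). In the $i$-th term of $\hat{f}_X(\mathbf{x},\mathbf{H})$ the kernel argument is $\mathbf{H}^{-1/2}(\mathbf{x}-\mathbf{x}_i)$, so that term — and hence the density from which the perturbation $\mathbf{t}_i$ in \eqref{eq:generate_sample} is drawn — vanishes outside the compact ellipsoid $\mathcal{E}_{\mathbf{H}}\triangleq\mathbf{H}^{1/2}\mathbb{B}=\{\mathbf{v}\in\mathbb{R}^p:\mathbf{v}^\top\mathbf{H}^{-1}\mathbf{v}\le 1\}$, where $\mathbb{B}$ is the closed unit ball; since $\mathbf{H}=\mathbf{H}^\top\succ 0$, $\mathcal{E}_{\mathbf{H}}$ is nondegenerate with nonempty interior. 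Consequently every sample from \eqref{eq:generate_sample} with parent $\mathbf{x}_{k_i}$ obeys $\mathbf{s}_i\in\mathbf{x}_{k_i}\oplus\mathcal{E}_{\mathbf{H}}$. This uses only that $\mathbf{t}_i$ is a draw from the scaled kernel, i.e.\ only the finite-support hypothesis; no assumption on the unknown $f$ is needed.

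Next I would define $\mathcal{W}_\mathrm{free}^2$ by Minkowski erosion, $\mathcal{W}_\mathrm{free}^2\triangleq\{\mathbf{x}\in\mathcal{W}:\mathbf{x}\oplus\mathcal{E}_{\mathbf{H}}\subseteq\mathcal{W}_\mathrm{free}^1\}$, which (using that $\mathcal{E}_{\mathbf{H}}$ is origin-symmetric) is precisely the complement in $\mathcal{W}$ of $\mathcal{W}_\mathrm{obs}^1$ enlarged by $\mathcal{E}_{\mathbf{H}}$ — the ``grown obstacles'' picture of Section~\ref{sec:truncKDE}. Because $\mathcal{E}_{\mathbf{H}}$ has nonempty interior, $\mathcal{W}_\mathrm{free}^2\subsetneq\mathcal{W}_\mathrm{free}^1$ as required, and since the construction is purely set-theoretic it is insensitive to whether $\mathcal{W}_\mathrm{free}^1$ is open and unbounded or closed and bounded. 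Restricting the candidate parents to $\bar X=X\cap\mathcal{W}_\mathrm{free}^2\subseteq X$ and drawing the indices $(k_1,\dots,k_m)$ only from $\bar X$ (uniformly, or with the reweighting of \eqref{eq:extract_weights}), every parent satisfies $\mathbf{x}_{k_i}\oplus\mathcal{E}_{\mathbf{H}}\subseteq\mathcal{W}_\mathrm{free}^1$ by the defining property of the erosion; chaining this with $\mathbf{s}_i\in\mathbf{x}_{k_i}\oplus\mathcal{E}_{\mathbf{H}}$ gives $\mathbf{s}_i\in\mathcal{W}_\mathrm{free}^1$ for every generated sample, i.e.\ rejection-free sampling of $\mathcal{W}_\mathrm{free}^1$.

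The step I expect to need the most care — and which I would surface rather than bury — is the well-definedness of the scheme, namely that $\bar X\neq\emptyset$: the generator only runs if some past state lies in the eroded free space $\mathcal{W}_\mathrm{free}^2$. This is where the hypothesis $\mathcal{W}_\mathrm{free}^0\cap\mathcal{W}_\mathrm{free}^1\neq\emptyset$ enters — it is the necessary condition for the overlap $\mathcal{W}_\mathrm{free}^0\cap\mathcal{W}_\mathrm{free}^2$ to be nonempty for suitably small $\mathbf{H}$, and hence for $\bar X$ to be nonempty once the horizon $T$ is long enough / the data dense enough that $X$ populates that overlap. Everything else reduces to the elementary containment chain above, so the bulk of the write-up is just fixing notation for $\mathcal{E}_{\mathbf{H}}$ and the erosion and stating the non-degeneracy conditions cleanly.
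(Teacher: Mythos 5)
Your proposal is correct and follows essentially the same route as the paper: dilate the obstacles (equivalently, erode $\mathcal{W}_\mathrm{free}^1$) by a bandwidth-dependent set containing the scaled kernel support, restrict the parent data points to $\bar{X} = X \cap \mathcal{W}_\mathrm{free}^2$, and invoke the finite support of the kernel so that every sample $\mathbf{s}_i = \mathbf{x}_{k_i} + \mathbf{t}_i$ remains in $\mathcal{W}_\mathrm{free}^1$. The only differences are minor: you erode by the exact support ellipsoid $\mathbf{H}^{1/2}\mathbb{B}$, whereas the paper uses a (more conservative) ball $\mathcal{B}$ of radius $\varrho$ expressed via $\|\mathbf{H}^{1/2}\|_q$ and the kernel's second moment $\mu_2(K)$, and you explicitly flag the well-definedness condition $\bar{X}\neq\emptyset$, which the paper leaves implicit.
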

\begin{proof}
	Given the set $X$, the $p$-variate KDE with finite support kernel $K(\cdot)$ (e.g., the box or Epanechnikov's kernel), and bandwidth matrix $\mathbf{H}$ \cite[Section 6.2.3]{scott2015multivariate}
	\begin{equation}
		\hat{f}_{X} : \mathcal{W}_\mathrm{free}^0 \longrightarrow E
	\end{equation}
	is an estimator of the unknown spatial distribution $f(\mathbf{x})$, which maps elements of the free space to density values, where 
	\begin{equation}
		E = \{ e \in  \mathbb{R}_+ | 0 \leq e \leq 1\} .
	\end{equation}
	$\hat{f}_{X}$ is a biased non-parametric probabilistic description of $\mathcal{W}_\mathrm{free}^0$, whose sampling allows to plan the motion of the agent $\mathcal{A}$ within the free space $\mathcal{W}_\mathrm{free}^0$.
	
	At an arbitrary time instant $t_2 > t_1$ the free space partly changes such that $\mathcal{W}_\mathrm{free}^1 \cap \mathcal{W}_\mathrm{free}^0 \neq \emptyset$, i.e., 
	\begin{equation}
		\mathcal{W}_{\mathrm{free}}^1 = \left\{ \mathbf{x} \in \mathcal{W} \, | \, \mathcal{A}(\mathbf{x}) \cap \mathcal{O}_1 = \emptyset \right\}
	\end{equation}
	where $\mathcal{O}_1$ is the new set of obstacles. 
    
	Let $\mathcal{B} = \left\{\mathbf{x} \in \mathbb{R}^p \, | \, \|\mathbf{x}\|\leq \varrho \right\}$ be a ball of radius $\varrho$, where $\varrho = (\lambda_{\min}(\mu_2(K)\mathbf{I}_p))^{-1} \|\mathbf{H}^{1/2}\|_q$, $\mu_2(K)\mathbf{I}_p$ is the second order moment of the selected kernel function $K(\cdot)$ \cite[Section 3.6]{hardle2004nonparametric}, and $\lambda_{\min}$ is its smallest eigenvalue. The new free space $\mathcal{W}_{\mathrm{free}}^2$ is defined by enlarging the obstacle regions $\mathcal{O}_1$, i.e.,  
	\begin{align}
		\mathcal{O}_2 &= \mathcal{O}_1 \oplus \mathcal{B} \\
		\mathcal{W}_{\mathrm{free}}^2 &= \left\{ \mathbf{x} \in \mathcal{W} \, | \, \mathcal{A}(\mathbf{x}) \cap \mathcal{O}_2 = \emptyset \right\}.
	\end{align}
	and by construction $\mathcal{W}_{\mathrm{free}}^2 \subset \mathcal{W}_{\mathrm{free}}^1$.
	
	The sampling of $\hat{f}_{X}(\mathbf{x},\mathbf{H})$ on the set $\bar{X} = X \cap \mathcal{W}_{\mathrm{free}}^2$ achieves the rejection-free sampling of $\mathcal{W}_{\mathrm{free}}^1$. To guarantee that all samples fall within $\mathcal{W}_{\mathrm{free}}^1$ the chosen kernel $K(\cdot)$ must have finite support, otherwise the use of the kernel $K(\cdot)$ on data points belonging to $\bar{X}$ could generate samples falling outside $\mathcal{W}_{\mathrm{free}}^1$.
\end{proof}
\textcolor{\revOne}{By generating samples as outlined in Section \ref{sec:samplekde}, based on the reduced dataset $\bar{X}$, one guarantees that the generated samples lie within $\mathcal{W}^1_{\mathrm{free}}$, as the reduced dataset by design falls within $\mathcal{W}^2_{\mathrm{free}}$. By generating samples, that is, applying bias to sampled data from $\bar{X}$, using the chosen kernel function, the newly generated samples are at most the distance of the bandwidth from the data in $\bar{X}$. Since the obstacles have been inflated by the bandwidth when generating $\mathcal{W}^2_{\mathrm{free}}$, it then ensures that if a given data point within $\bar{X}$ lies on the boundary of $\mathcal{W}^2_{\mathrm{free}}$, the resulting generated sample will at its extreme be on the boundary of $\mathcal{W}^1_{\mathrm{free}}$, thereby ensuring that all samples generated by $\hat{f}_{X}(\mathbf{x},\mathbf{H})$ fall within $\mathcal{W}^1_{\mathrm{free}}$. The requirement for the finite support kernel function comes from the link between the kernel bandwidth and obstacle inflation, as it is crucial for ensuring that the generated samples are guaranteed to be within $\mathcal{W}^1_{\mathrm{free}}$ in which the planning problem takes place.}
\begin{figure*}
	\centering
	\begin{subfigure}[b]{0.51\columnwidth}
		\centering
		\includegraphics[width=\textwidth]{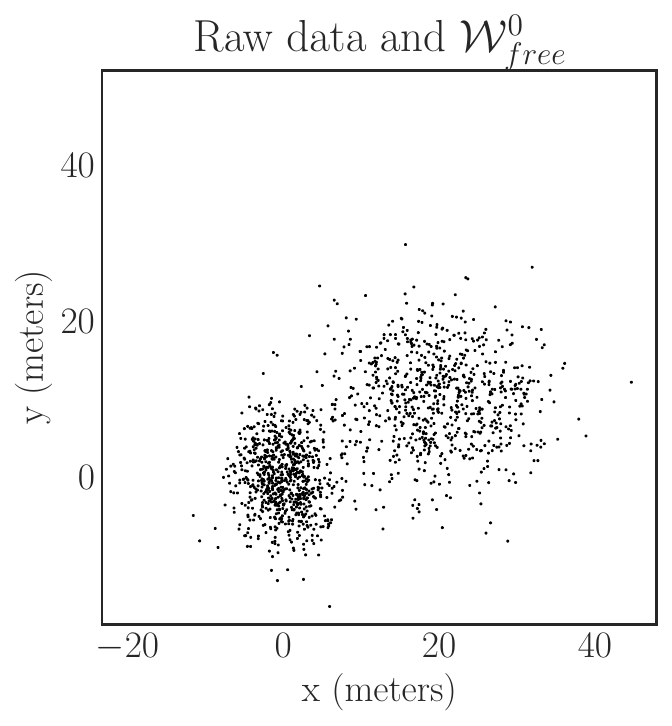}
		\caption{ $X$ and $\mathcal{W}^0_{\text{free}}$ }
		\label{fig:toy_raw_data}
	\end{subfigure}
	\begin{subfigure}[b]{0.51\columnwidth}
		\centering
		\includegraphics[width=\textwidth]{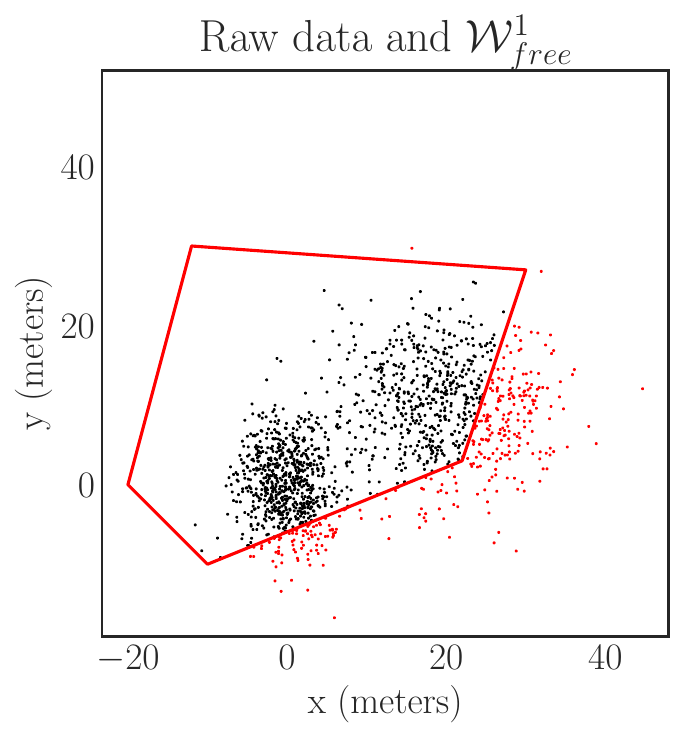}
		\caption{Imposing $\mathcal{W}^1_{\text{free}}$ on $X$}
		\label{fig:toy_raw_data_w1}
	\end{subfigure}
	\begin{subfigure}[b]{0.51\columnwidth}
		\centering
		\includegraphics[width=\textwidth]{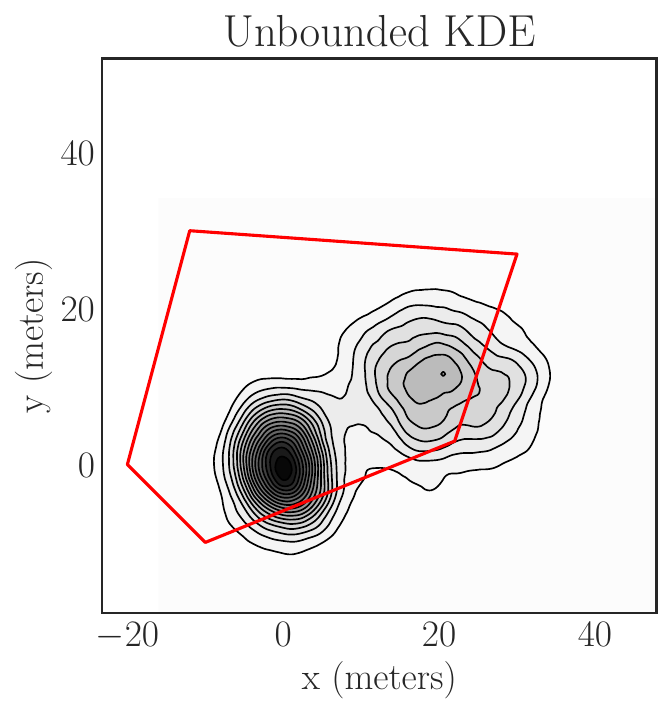}
		\caption{KDE based on data $X$}
		\label{fig:toy_unbounded_kde}
	\end{subfigure}
	\begin{subfigure}[b]{0.51\columnwidth}
		\centering
		\includegraphics[width=\textwidth]{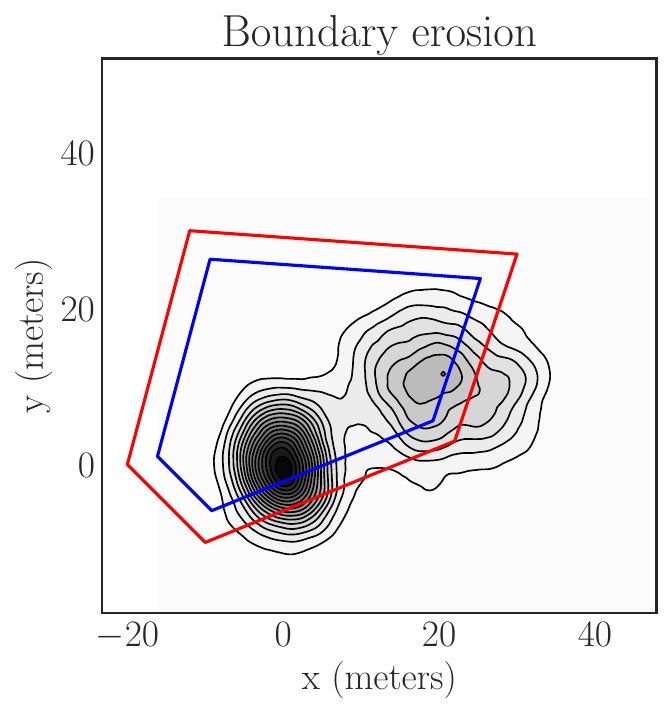}
		\caption{Erosion to create $\mathcal{W}^2_{\text{free}}$}
		\label{fig:toy_erosion}
	\end{subfigure}
	\begin{subfigure}[b]{0.51\columnwidth}
		\centering
		\includegraphics[width=\textwidth]{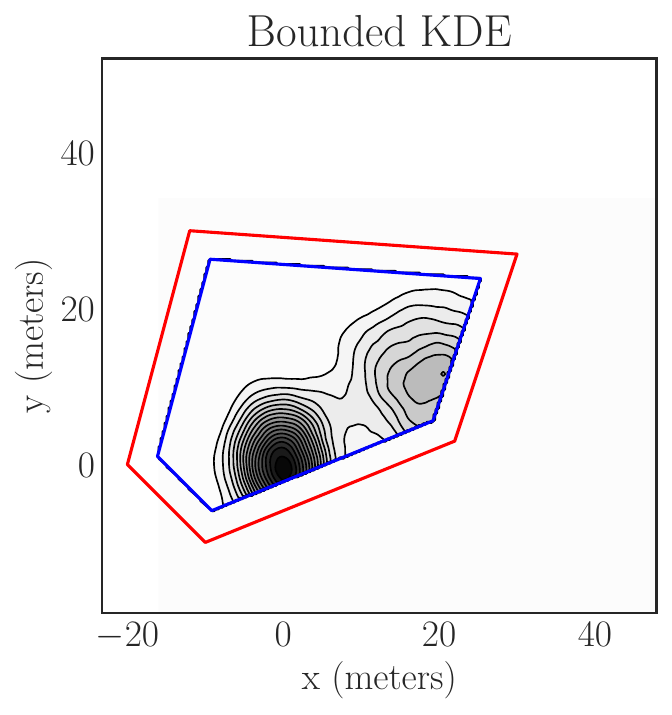}
		\caption{Truncation of KDE}
		\label{fig:toy_bounded_kde}
	\end{subfigure}
	\begin{subfigure}[b]{0.51\columnwidth}
		\centering
		\includegraphics[width=\textwidth]{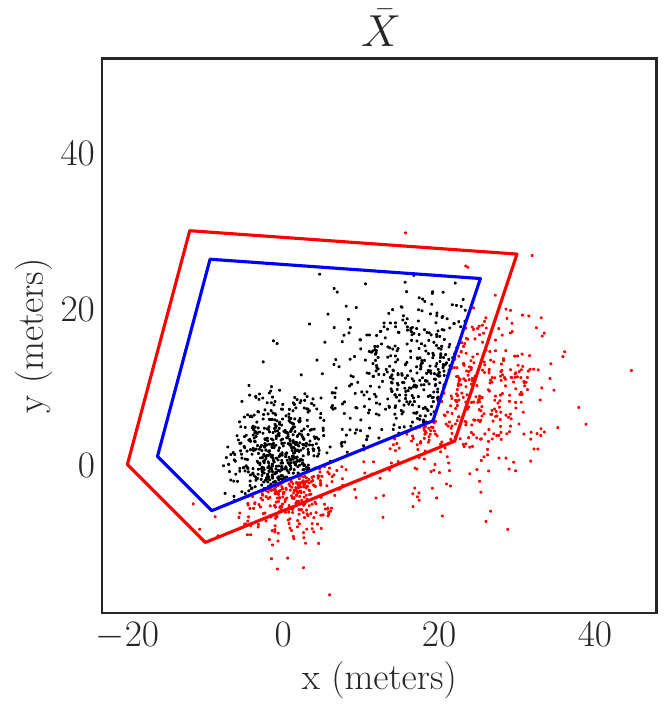}
		\caption{Imposing $\mathcal{W}^2_{\text{free}}$ on $X$ ($\bar{X}$)}
		\label{fig:toy_removing_samples}
	\end{subfigure}
	\begin{subfigure}[b]{0.51\columnwidth}
		\centering
		\includegraphics[width=\textwidth]{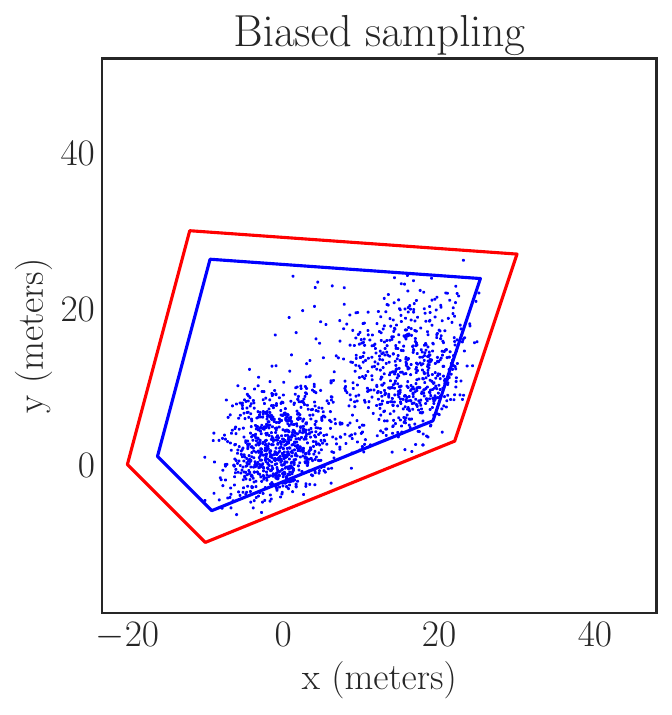}
		\caption{Sampling $\hat{f}_{\bar{X}}$ directly}
		\label{fig:toy_biased_sampling}
	\end{subfigure}
	\begin{subfigure}[b]{0.51\columnwidth}
		\centering
		\includegraphics[width=\textwidth]{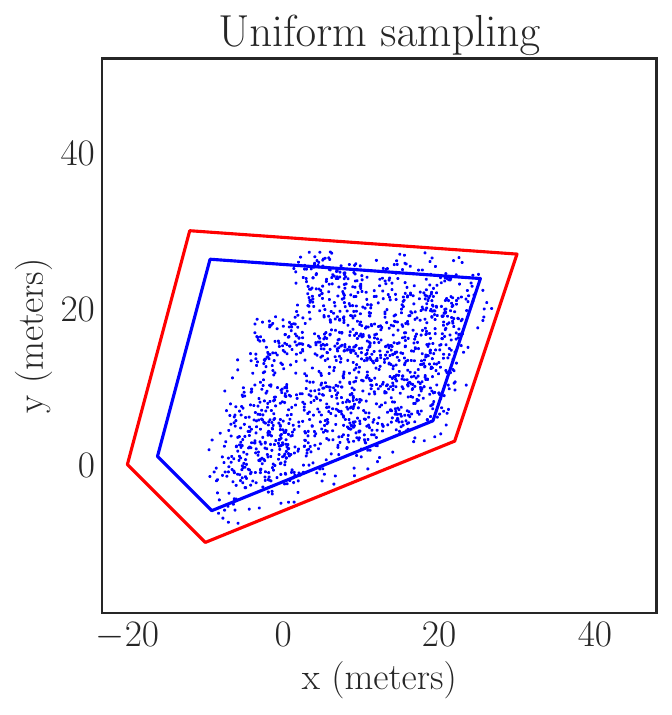}
		\caption{Sampling weighted $\hat{f}_{\bar{X}}$}
		\label{fig:toy_uniform_sampling}
	\end{subfigure}
	\caption{Visual representation of each step of the proposed method, as described in Section \ref{sec:propsed_method}. Once the space $\mathcal{W}_{\text{free}}^2$ (inner blue polygon) has been generated based on the selected bandwidth and the corresponding bounded KDE, one guarantees the ability to sample the $\mathcal{W}_{\text{free}}^1$ space without rejection sampling, within the domain covered by the historical data. Fig.~\ref{fig:toy_biased_sampling} and Fig.~\ref{fig:toy_uniform_sampling} demonstrates the ability to generate rejection-free samples in both a biased and approximately uniform manner.} 
	\label{fig:toy_example}
\end{figure*}
\begin{rem}
	To obtain the approximately uniform sampling of $(\bar{X} \oplus \mathcal{B}) \subset \mathcal{W}_{\mathrm{free}}^1$ the $p$-variate KDE $\hat{f}_{X}$ should be reweighted, \textcolor{\revOne}{as the volume of the estimated PDF is no longer equal to 1}. This can be achieved through two approaches: (i) the estimation of a new KDE based on the data set $\bar{X}$; (ii) the truncation and normalization of the original KDE. Following the latter, $\hat{f}_{X}(\mathbf{x},\mathbf{H})$ is first truncated by zeroing the densities that falls outside $\mathcal{W}_{\mathrm{free}}^2$, and then normalized to ensure that the resulting function still qualifies as a density, i.e.,
	\begin{align}
		\bar{f}_{\bar{X}}(\mathbf{x},\mathbf{H}) &= 
		\begin{cases}
			\hat{f}_{X}(\mathbf{x},\mathbf{H}),&\forall \mathbf{x} \in \bar{X} = X \cap \mathcal{W}_{\mathrm{free}}^2\\
			0,              & \mathrm{otherwise}
		\end{cases} \\
		\hat{f}_{\bar{X}}(\mathbf{x},\mathbf{H}) &= \frac{\bar{f}_{\bar{X}}(\mathbf{x},\mathbf{H})}{\rho}
	\end{align}
	where
	\begin{equation}
		\rho = \int_{\mathbb{R}^p} \bar{f}_{\bar{X}}(\mathbf{s},\mathbf{H}) \, \mathrm{d}\,\mathbf{s} .
	\end{equation}
\end{rem}
\textcolor{\revOne}{Any collected data from past instances of obstacle configurations in $\mathcal{W}^0_{\mathrm{free}}$ can be used to calculate a KDE based on the updated workspace $\mathcal{W}^1_{\mathrm{free}}$. It should be noted that the guaranteed rejection-free sampling scheme generated for a given $\mathcal{W}^1_{\mathrm{free}}$ only guarantees rejection-free sampling for the static obstacle configuration presented in that particular workspace.}

\subsection{Toy Example}
The following section presents a toy example to provide a detailed demonstration of how the proposed method is applied to a given problem. 

Using two bivariate normally distributed random variables $X_1 \sim \mathcal{N}(\mu_1, \Sigma_1)$ and $X_2 \sim \mathcal{N}(\mu_2, \Sigma_2)$ with parameters $\mu_1 = \mathbf{0}$, $\Sigma_1 = \text{diag}([10, 20])$, $\mu_2= \begin{bmatrix}20 & 10\end{bmatrix}^T$ and $\Sigma_2 = \text{diag}([45,35])$, some historical data, in this case 1500 samples, is generated, such that a collection of data points 
\begin{equation}
	X = [(x_1, y_1), (x_2, y_2), \dots, (x_n, y_n)]^\top    
\end{equation}
is created. This particular data set is created such that it represents past states that at some point were feasible with respect to the nominal space $\mathcal{W}_{\mathrm{free}}^0$. However, at the current time instant, a boundary limitation has been imposed described by the following polygon $P$
\begin{equation*}
	P = [(-10, -10), (22, 3), (30, 27), (-12, 30), (-20, 0)]
\end{equation*}
which now represents the new free space $\mathcal{W}_{\mathrm{free}}^1$. Fig.~\ref{fig:toy_raw_data} visualises the historical data and nominal space, whereas Fig.~\ref{fig:toy_raw_data_w1} visualises the imposed boundary $\mathcal{W}_{\mathrm{free}}^1$ and its impact on the historical data. 
An unbounded KDE is computed based on the data set $X$ using the bandwidth matrix $\mathbf{H} = 2\mathbf{I}$ and the finite support Epanechnikov kernel. Fig.~\ref{fig:toy_unbounded_kde} shows a visualisation of such KDE. 
To guarantee that sampling of the new free space $\mathcal{W}_{\mathrm{free}}^1$ occurs completely rejection-free, a final space is introduced. This space is an erosion (or shrinkage) of the polygonal boundary (for general obstacle regions inflation is instead required), resulting in the space $\mathcal{W}_{\mathrm{free}}^2$, as shown in Fig.~\ref{fig:toy_erosion}. The amount of erosion is related to the bandwidth matrix $\mathbf{H}$ through the radius $\varrho$. 
\begin{figure*}
	\centering
	\begin{subfigure}[b]{0.64\columnwidth}
		\centering
		\includegraphics[width=\textwidth]{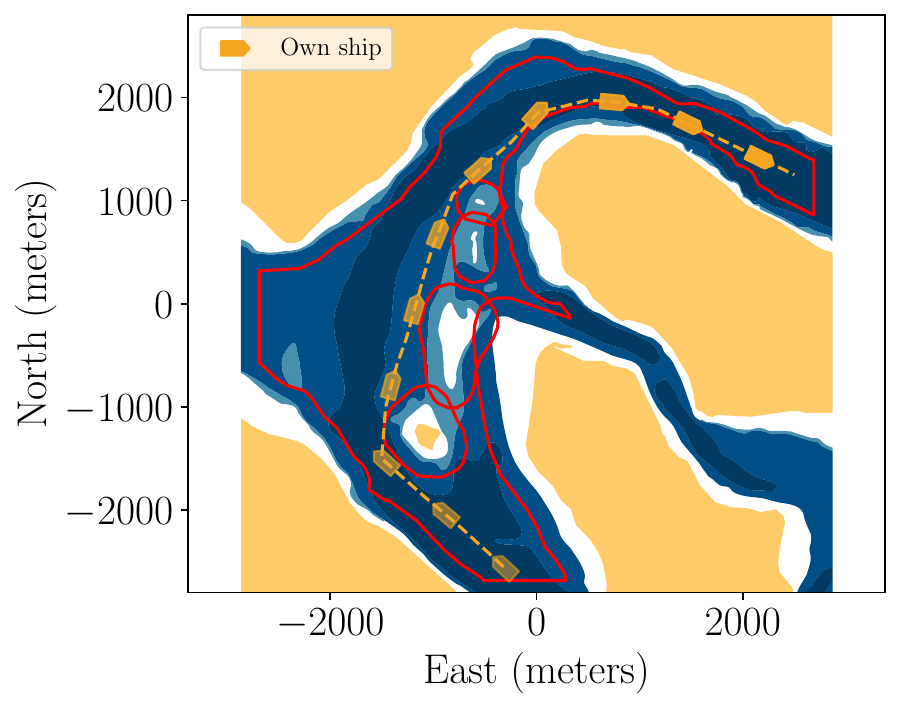}
		\caption{ASV scenario}
		\label{fig:scenario_ASV}
	\end{subfigure}
	\begin{subfigure}[b]{0.64\columnwidth}
		\centering
		\includegraphics[width=\textwidth]{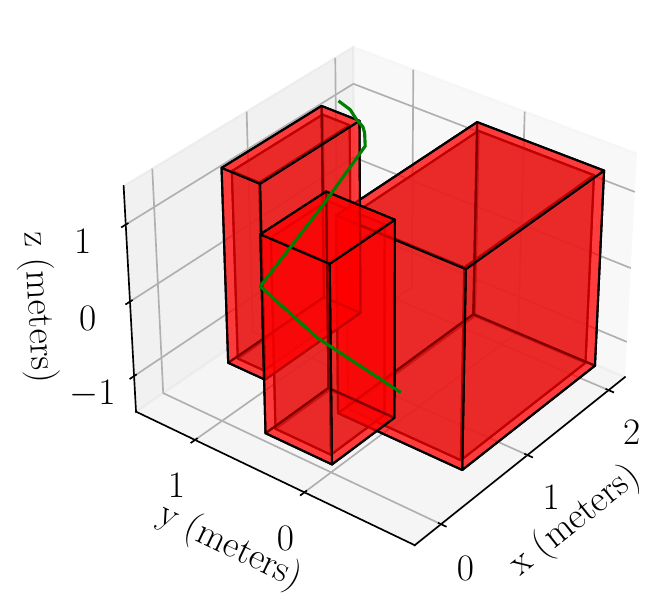}
		\caption{Drone scenario}
		\label{fig:drone_scenario_path}
	\end{subfigure}
	\begin{subfigure}[b]{0.64\columnwidth}
		\centering
		\includegraphics[width=0.85\textwidth]{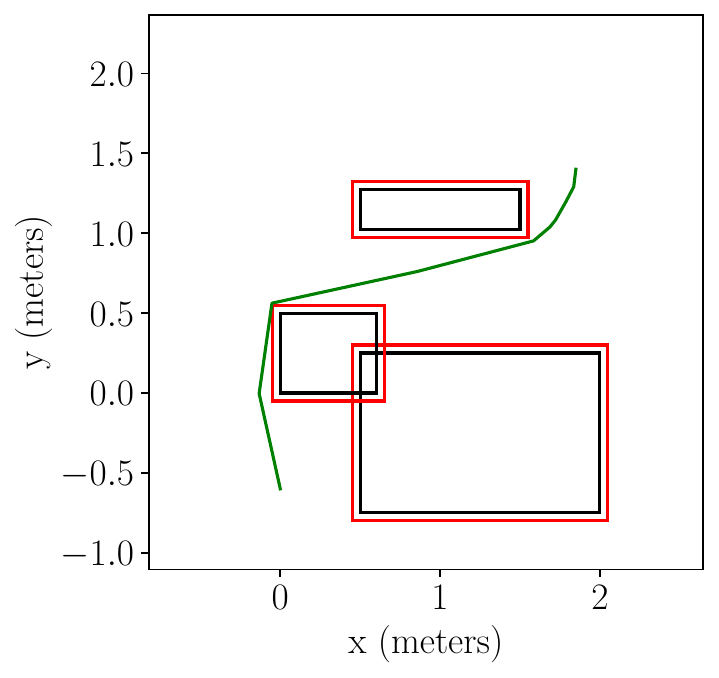}
		\caption{2D projection of Fig.~\ref{fig:drone_scenario_path}}
		\label{fig:drone_scenario_path_2d}
	\end{subfigure}
	\caption{Outcomes from the selected case studies. Fig.~\ref{fig:scenario_ASV} shows a path where the vessel maintains the specified safety distance to shallow waters. The proposed method ensures that only safe samples are generated, thereby increasing the efficiency of computing low cost solutions, as evident in Table \ref{tab:stopping_costs_ASV} and Fig.~\ref{fig:acces_box_plots}. Details regarding safe sampling-based motion planning for ASVs can be found in \cite{enevoldsen2021kde} and \cite{enevoldsen2022oe}. Fig.~\ref{fig:drone_scenario_path} details a similar scenario, but instead for the drone. This particular case study mimics an inspection task, and therefore the drone must also maintain some safety distance from the obstacles (see \cite{inspectdrone_data} for more details). Note that due to limitations with the 3D engine used for plotting, the location of the data points within the figure may be deceptive, see therefore instead the 2D projection in Fig.~\ref{fig:drone_scenario_path_2d}.}
	\label{fig:case_studies}
\end{figure*}
\begin{table}
	\caption{Computing obstacle-free samples (from $\mathcal{W}^1_{\text{free}}$) 10,000 times (Fig.~\ref{fig:scenario_ASV}). Comparison between the baseline (B) and proposed approach (R), where the percentage ($\Delta \%$) is computed as $(R-B)/B$, where lower numbers are better. Mean ($\mu$), median $\bar{\mu}$ and standard deviation ($\sigma$).}
	\label{tab:ASV_sampling_speeds}
	\begin{center}
		{\renewcommand{\arraystretch}{1.15}
			\resizebox{\columnwidth}{!}{
				\begin{tabular}{rrrrcrrrr}
					\toprule
					& \multicolumn{3}{c}{Samples} & \phantom{a}&  \multicolumn{3}{c}{Times}\\
					\cmidrule{2-4} \cmidrule{6-8} 
					& \multicolumn{1}{c}{B} & \multicolumn{1}{c}{R} &\multicolumn{1}{c}{$\Delta \%$}&& \multicolumn{1}{c}{B} & \multicolumn{1}{c}{R}&\multicolumn{1}{c}{$\Delta \%$}&\\
					\midrule
					$\mu$ & 9552.722 & 2500.000 &-73.83\% && 0.423 & 0.116 &-72.63\% &\\
					$\bar{\mu}$ & 9551.000 & 2500.000& -73.83\% && 0.422 & 0.116& -72.55\%& \\
					$\sigma$ & 164.550 & 0 &-100.00\% && 0.009 & 0.001& -93.02\%& \\
					\hline
				\end{tabular}
			}
		}
	\end{center}
\end{table}
The computed KDE is truncated to $\mathcal{W}_{\mathrm{free}}^2$ and normalised, resulting in the KDE presented in Fig.~\ref{fig:toy_bounded_kde}.
By reducing the original data set $X$ to only encompass points which fall within $\mathcal{W}_{free}^2$, $\bar{X}$, the KDE can be sampled in either of the two ways presented in Section \ref{sec:samplekde}. 
Fig.~\ref{fig:toy_removing_samples} shows the subset $\bar{X}$, which will be utilised such that no rejection sampling is required. Finally, $\bar{X}$ and the resulting KDE are used to generate samples. 
Fig.~\ref{fig:toy_biased_sampling} and Fig.~\ref{fig:toy_uniform_sampling} demonstrate the sampling schemes ability to generate biased samples as well as an approximate uniform coverage of an extended region within $\mathcal{W}_{free}^1$ surrounding the original data set $X$.

\section{Case Studies}\label{sec:casestudies}
The following case studies are performed on raw data, without any sort of pre-processing or augmentation. This allows the effectiveness of the proposed method to be demonstrated. However, in practise, one may potentially gain further increased performance from procedures such as upsampling. 

All KDEs were computed using \texttt{KDEpy} \citep{odland2018kdepy}, an FFT-based KDE package for Python. The proposed rejection-free sampling method is compared to the simplest and most versatile method, namely uniform sampling. The planning problem is solved using RRT* \citep{karaman2011sampling}, but the sampling strategy is general and can therefore be used with other sampling-based motion planners. During each simulation in the comparison study, the various planner parameters remain constant; the only change is the sampling scheme. Both case studies are divided into two subproblems: (i) finding a feasible solution, and (ii) finding a solution at a lesser cost than some threshold. 
\begin{figure*}
	\centering
	\begin{subfigure}[b]{0.625\columnwidth}
		\centering
		\includegraphics[width=\textwidth]{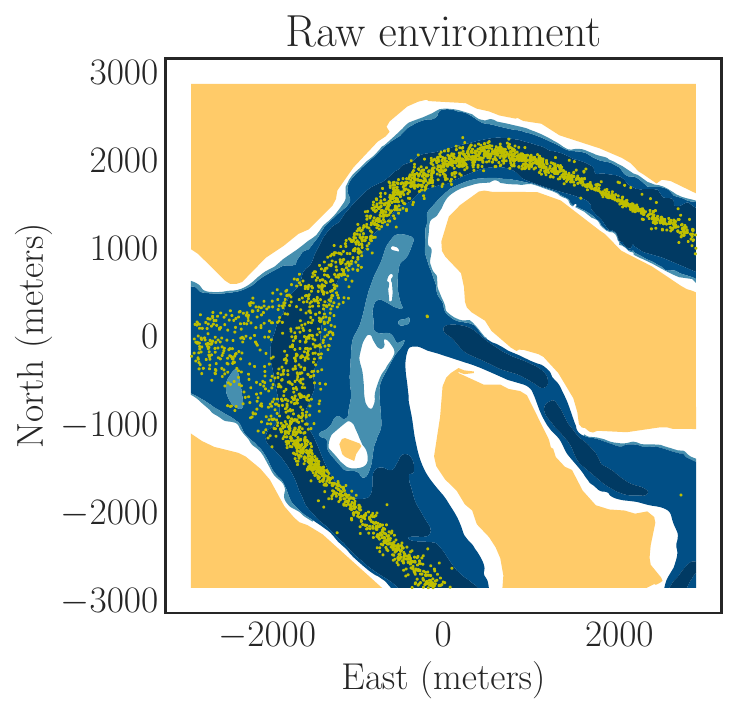}
		\caption{ $X$ and $\mathcal{W}^0_{\text{free}}$ }
		\label{fig:marine_raw}
	\end{subfigure}
	\begin{subfigure}[b]{0.625\columnwidth}
		\centering
		\includegraphics[width=\textwidth]{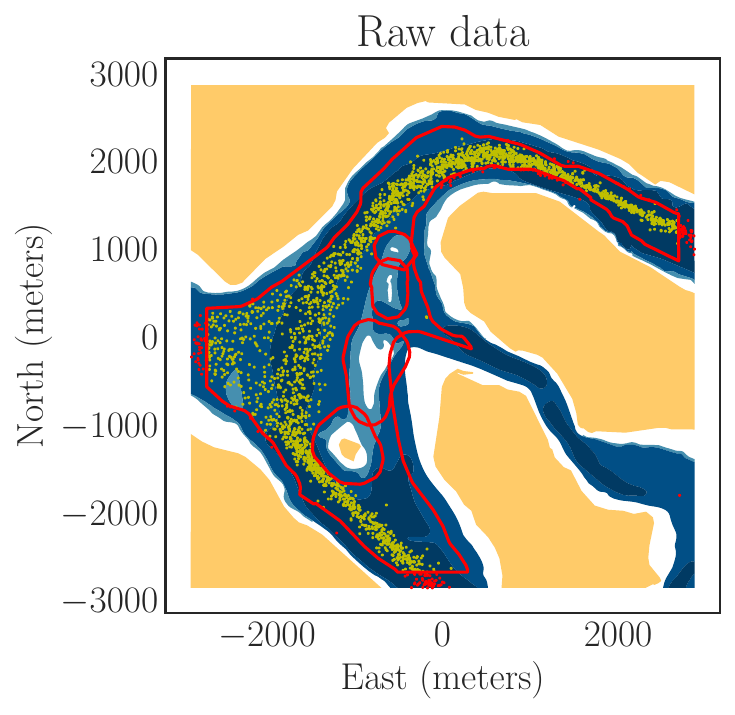}
		\caption{Imposing $\mathcal{W}^1_{\text{free}}$ on $X$}
		\label{fig:marine_raw_data}
	\end{subfigure}
	\begin{subfigure}[b]{0.625\columnwidth}
		\centering
		\includegraphics[width=\textwidth]{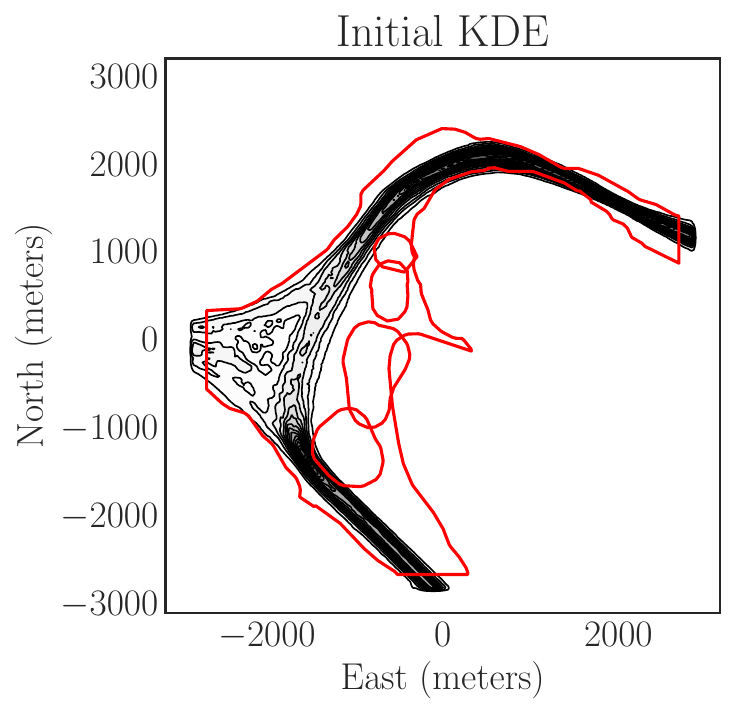}
		\caption{KDE based on data $X$}
		\label{fig:marine_initial_kde}
	\end{subfigure}
	\begin{subfigure}[b]{0.625\columnwidth}
		\centering
		\includegraphics[width=\textwidth]{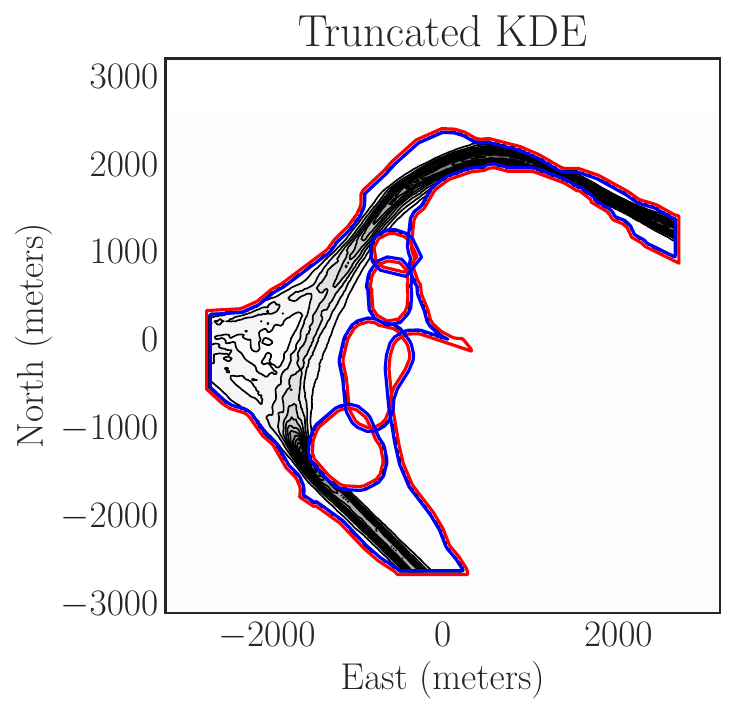}
		\caption{Erosion and dilation to create $\mathcal{W}^2_{\text{free}}$}
		\label{fig:marine_truncated_kde}
	\end{subfigure}
	\begin{subfigure}[b]{0.625\columnwidth}
		\centering
		\includegraphics[width=\textwidth]{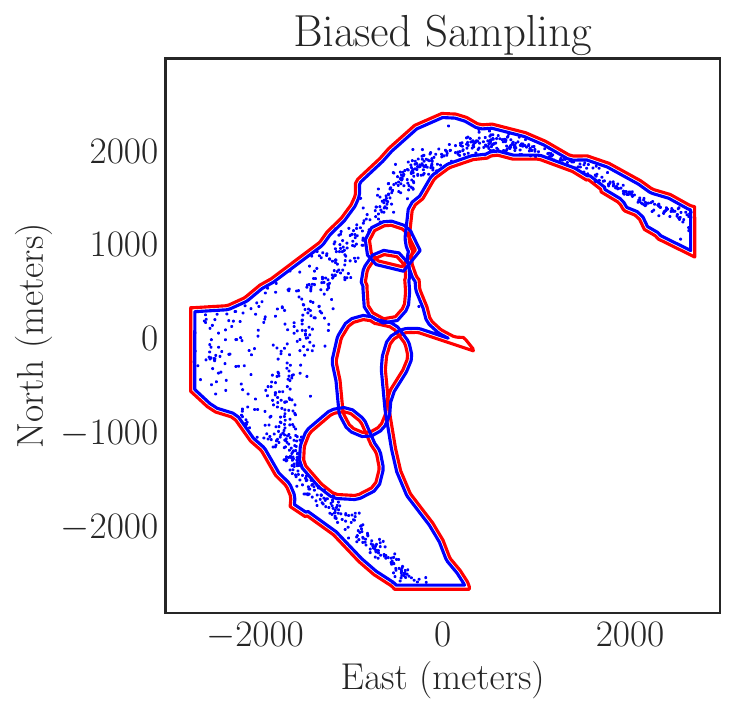}
		\caption{Sampling $\hat{f}_{\bar{X}}$ directly}
		\label{fig:marine_biased_sampling}
	\end{subfigure}
	\begin{subfigure}[b]{0.625\columnwidth}
		\centering
		\includegraphics[width=\textwidth]{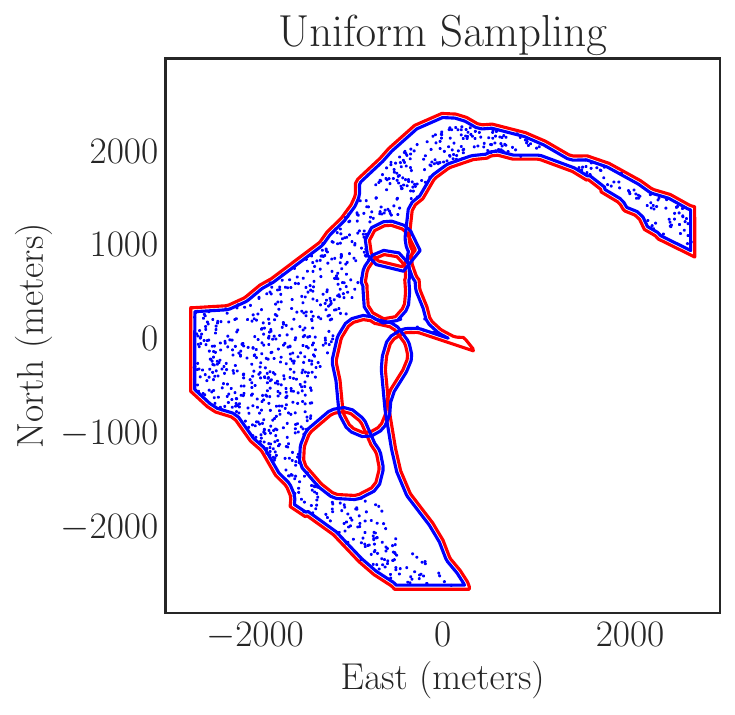}
		\caption{Sampling weighted $\hat{f}_{\bar{X}}$}
		\label{fig:marine_uniform_sampling}
	\end{subfigure}
	\caption{Application of the proposed method detailed in Section \ref{sec:propsed_method}, where the generated sampling schemes are created based on real historical data from vessels passing through the Little belt area of Denmark, such that the resulting samples ensure feasibility with respect to the available water depth. The white regions in Fig.~\ref{fig:marine_raw} and \ref{fig:marine_raw_data} are infeasible regions for the chosen vessel. For more details regarding this particular case study and associated data, see \cite{enevoldsen2021kde}.}
	\label{fig:marine_study}
	\centering
	\begin{subfigure}[b]{0.625\columnwidth}
		\centering
		\includegraphics[width=\textwidth]{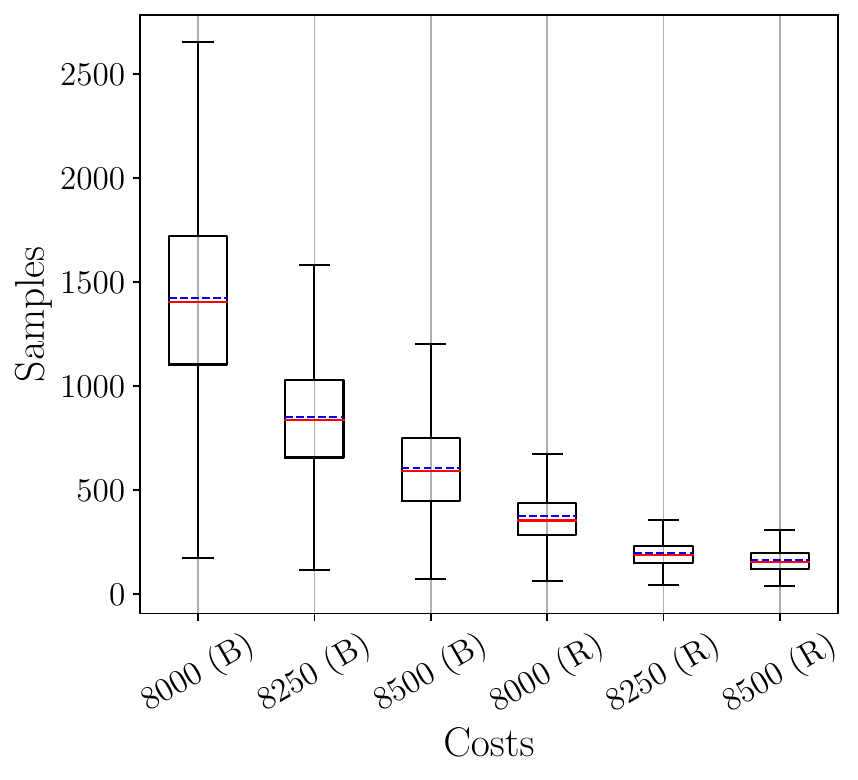}
		\caption{Number of samples}
		\label{fig:access_box_samples}
	\end{subfigure}
	\begin{subfigure}[b]{0.625\columnwidth}
		\centering
		\includegraphics[width=\textwidth]{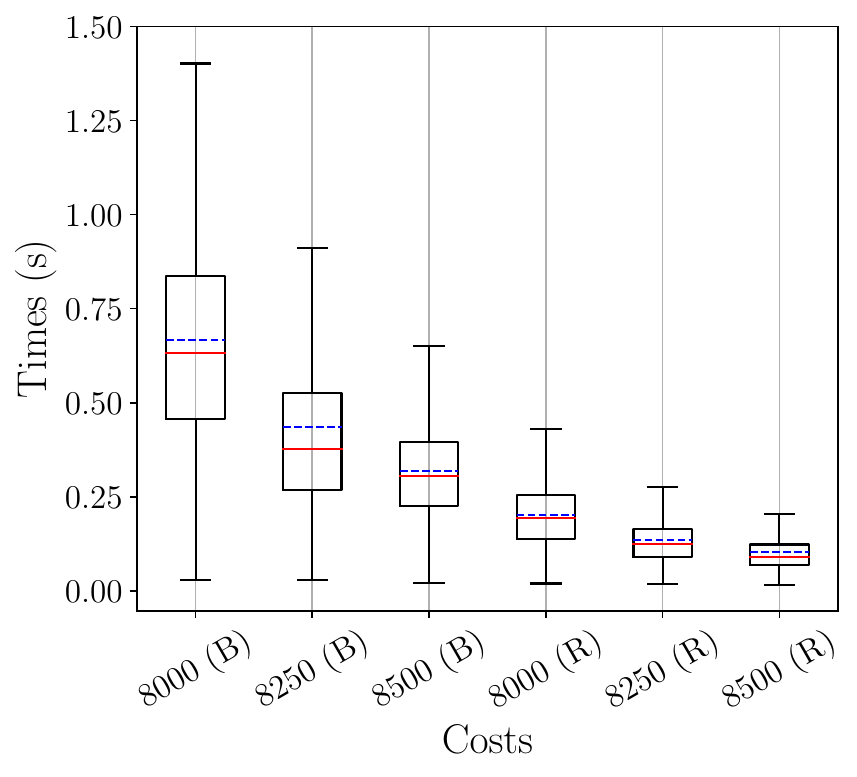}
		\caption{Computational time}
		\label{fig:access_box_times}
	\end{subfigure}
	\begin{subfigure}[b]{0.625\columnwidth}
		\centering
		\includegraphics[width=\textwidth]{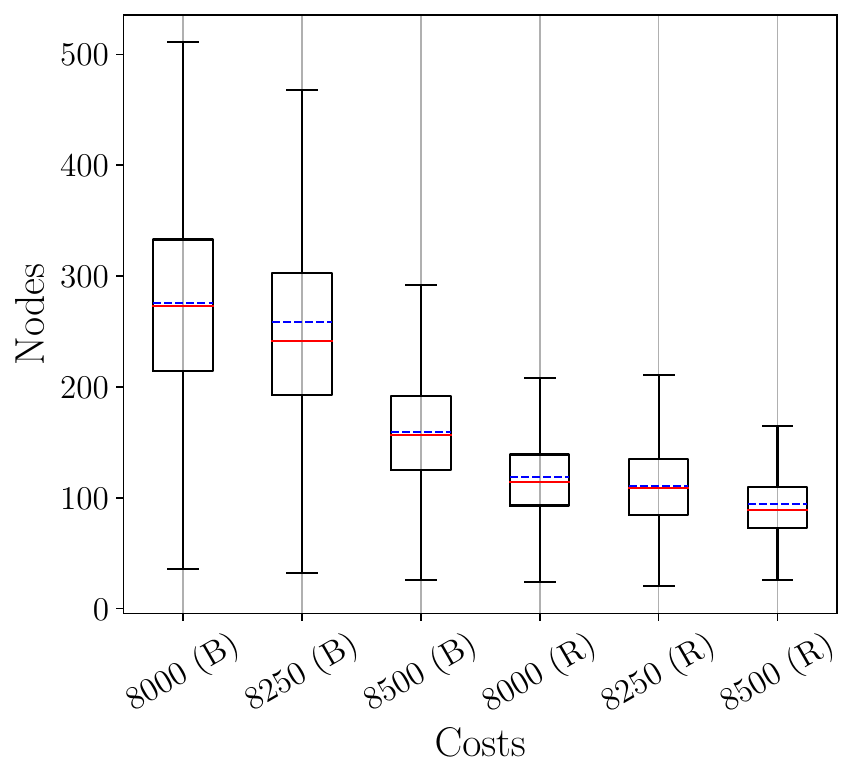}
		\caption{Graph nodes}
		\label{fig:access_box_nodes}
	\end{subfigure}
	\caption{Comparison between the required number of samples, computational time and graph nodes required to achieve the specified solution cost ($c=8000$, $c=8250$ and $c=8500$) for both the baseline sampling scheme (B) and proposed rejection-free sampling method (R), when solving the planning problem for the ASV (Fig.~\ref{fig:scenario_ASV}). The blue line indicates the mean and solid red the median. Each method has been simulated 10,000 times, further statistical results can be found in Table \ref{tab:stopping_costs_ASV}. }\label{fig:acces_box_plots}
\end{figure*}
\subsection{Autonomous Surface Vessel Sailing in Confined Waters}
The development of autonomous ships has been in focus in recent years, where it is desired to bring highly automated capabilities to vessels such as harbour buses, small island ferries, or even larger vessels such as container feeders. A key component towards the achievement of autonomous marine navigation is collision and grounding (i.e., sailing in waters shallower than the clearance) avoidance. Therefore, a tailored sampling space for such a system could be designed to directly sample regions where the given vessel would typically operate, without sampling states that may cause grounding. Formulating a data-driven sampling space for this particular application is made possible by a significant amount of available GPS data, since modern standards dictate that certain classes of vessels must broadcast their positions at all times. 
The presented case study (Fig.~\ref{fig:marine_study}) considers a vessel restricted by shallow waters, which, according to good and safe navigation practise, also wants to maintain a safe distance from shallow water \citep{enevoldsen2022oe}. The case study uses real ships' position and chart data from the Little Belt area in Denmark (for additional details regarding this particular case study and data see \cite{enevoldsen2021kde}). Fig.~\ref{fig:marine_raw} shows the feasible contours $\mathcal{W}^0_{\text{free}}$ and the past ships' position data $X$, and Fig.~\ref{fig:marine_raw_data} details the imposed safety distance, both on the data and contours ($\mathcal{W}^1_{\text{free}}$). Fig.~\ref{fig:marine_initial_kde} shows the estimated KDE based on the data set $X$ using the box kernel and $\mathbf{H} = 25\mathbf{I}$. Given the selected bandwidth, the boundary polygon is eroded and the obstacles are dilated, as detailed in Fig.~\ref{fig:marine_truncated_kde}, in order to generate the final space $\mathcal{W}^2_{\text{free}}$. Once obtained, the data set can be reduced by imposing $\mathcal{W}^2_{\text{free}}$ on $X$, giving $\bar{X}$, and then using it for guaranteed rejection-free sampling of $\mathcal{W}^1_{\text{free}}$. 
\begin{table*}
	\caption{Statistics related to computing feasible solutions to the planning problem related to the ASV 10,000 times (Fig.~\ref{fig:scenario_ASV}). Comparison between the baseline (B) and proposed approach (R), where the percentage ($\Delta \%$) is computed as $(R-B)/B$, where lower numbers are better. Mean ($\mu$), median ($\bar{\mu}$) and standard deviation ($\sigma$).}
	\label{tab:ASV_feasible}
	\begin{center}
		{\renewcommand{\arraystretch}{1.15}
			\resizebox{\textwidth}{!}{
				\begin{tabular}{rrrrcrrrcrrrcrrrr}
					\toprule
					& \multicolumn{3}{c}{Samples} & \phantom{a}&  \multicolumn{3}{c}{Times} & \phantom{a}&\multicolumn{3}{c}{Nodes} & \phantom{a}&\multicolumn{3}{c}{Cost}&\phantom{a}\\
					\cmidrule{2-4} \cmidrule{6-8} \cmidrule{10-12} \cmidrule{14-16}
					& \multicolumn{1}{c}{B} & \multicolumn{1}{c}{R}& \multicolumn{1}{c}{$\Delta \%$} && \multicolumn{1}{c}{B} & \multicolumn{1}{c}{R} & \multicolumn{1}{c}{$\Delta \%$}&& \multicolumn{1}{c}{B} & \multicolumn{1}{c}{R} & \multicolumn{1}{c}{$\Delta \%$}&& \multicolumn{1}{c}{B} & \multicolumn{1}{c}{R}& \multicolumn{1}{c}{$\Delta \%$}&\\
					\midrule
					$\mu$ & 318.832 & 156.470 &-50.92\% && 0.085 & 0.093 &10.07\% && 53.346 & 88.325 &65.57\% && 9011.607 & 8363.973 &-7.19\%& \\
					$\bar{\mu}$ & 288.000 & 147.000 &-48.96\% && 0.076 & 0.082 &8.90\% && 50.000 & 83.000& 66.00\% && 9001.567 & 8335.714 &-7.40\%& \\
					$\sigma$  & 155.558 & 60.195 &-61.30\% && 0.044 & 0.055& 23.86\% && 19.066 & 31.671& 66.11\% && 437.860 & 238.787 &-45.47\%& \\
					\hline
				\end{tabular}
			}
		}
	\end{center}
	\caption{Solutions to the ASV (10,000 times) planning problem (Fig.~\ref{fig:scenario_ASV}), comparing the baseline (B) and proposed approach (R), the planner was terminated once the solution cost $c$ was less than 8500, 8250, and 8000. The percentage ($\Delta \%$) is computed as $(R-B)/B$, where lower numbers are better. Mean ($\mu$), median ($\bar{\mu}$) and standard deviation ($\sigma$).} 
	\label{tab:stopping_costs_ASV}
	\begin{center}
		{\renewcommand{\arraystretch}{1.15}
			\resizebox{\textwidth}{!}{
				\begin{tabular}{rrrrcrrrcrrrcrrrr}
					\toprule
					& \multicolumn{3}{c}{Samples} & \phantom{a}&  \multicolumn{3}{c}{Times} & \phantom{a}&\multicolumn{3}{c}{Nodes} & \phantom{a}&\multicolumn{3}{c}{Cost}&\phantom{a}\\
					\cmidrule{2-4} \cmidrule{6-8} \cmidrule{10-12} \cmidrule{14-16}
					& \multicolumn{1}{c}{B} & \multicolumn{1}{c}{R}& \multicolumn{1}{c}{$\Delta \%$} && \multicolumn{1}{c}{B} & \multicolumn{1}{c}{R} & \multicolumn{1}{c}{$\Delta \%$}&& \multicolumn{1}{c}{B} & \multicolumn{1}{c}{R} & \multicolumn{1}{c}{$\Delta \%$}&& \multicolumn{1}{c}{B} & \multicolumn{1}{c}{R}& \multicolumn{1}{c}{$\Delta \%$}&\\
					\midrule
					\multicolumn{16}{c}{$c = 8500$}\\
					$\mu$ & 606.060 & 163.559 & -73.01\% & & 0.202 & 0.103 & -49.01\% & & 110.715 & 94.495 & -14.65\% & & 8406.989 & 8301.497 & -1.25\% & \\
					$\bar\mu$ & 592.000 & 154.000 & -73.99\% & & 0.192 & 0.091 & -52.75\% & & 109.000 & 89.000 & -18.35\% & & 8443.666 & 8324.918 & -1.41\% & \\
					$\sigma$ & 234.630 & 63.259 & -73.04\% & & 0.091 & 0.075 & -17.76\% & & 38.008 & 35.274 & -7.19\% & & 122.027 & 153.021 & 25.40\% & \\
					\hline
					\multicolumn{16}{c}{$c = 8250$}\\
					$\mu$ & 849.348 & 194.905 & -77.05\% & & 0.318 & 0.136 & -57.24\% & & 159.592 & 118.532 & -25.73\% & & 8185.689 & 8179.320 & -0.08\% & \\
					$\bar\mu$ & 834.000 & 188.000 & -77.46\% & & 0.305 & 0.124 & -59.25\% & & 157.000 & 114.000 & -27.39\% & & 8213.833 & 8204.987 & -0.11\% & \\
					$\sigma$ & 286.791 & 72.240 & -74.81\% & & 0.134 & 0.116 & -13.30\% & & 51.661 & 43.497 & -15.80\% & & 92.511 & 83.872 & -9.34\% & \\
					\hline
					\multicolumn{16}{c}{$c = 8000$}\\
					$\mu$ & 1423.857 & 373.699 & -73.75\% & & 0.666 & 0.434 & -34.82\% & & 275.855 & 258.625 & -6.25\% & & 7948.200 & 7979.359 & 0.39\% & \\
					$\bar\mu$ & 1405.000 & 353.000 & -74.88\% & & 0.633 & 0.378 & -40.34\% & & 273.000 & 241.500 & -11.54\% & & 7978.459 & 7987.719 & 0.12\% & \\
					$\sigma$ & 478.265 & 139.103 & -70.92\% & & 0.301 & 0.264 & -12.27\% & & 93.382 & 102.636 & 9.91\% & & 81.168 & 39.102 & -51.83\% & \\
					\hline
				\end{tabular}
			}
		}
	\end{center}
\end{table*}
Given the reduced data set $\bar{X}$ and the truncated KDE, samples can be generated directly from the estimated distribution. However, this results in samples that are biased towards the original dataset, which for motion planning applications may be undesirable. Therefore, samples are weighted by the inverse of their probability density to generate approximately uniform samples of the domain described by the truncated KDE. The biased sampling and approximately uniform sampling is shown in Fig.~\ref{fig:marine_biased_sampling} and \ref{fig:marine_uniform_sampling} respectively, where the only difference is how samples are generated from the KDE.

Monte Carlo simulations were used to investigate the performance of the proposed method for generating approximately uniform samples from the KDE. All the following results were generated from solving the planning problem detailed in Fig.~\ref{fig:scenario_ASV}, where the baseline sampling scheme is simply a rectangular approximation of the planning region. 
\begin{figure*}
	\centering
	\begin{subfigure}[b]{0.625\columnwidth}
		\centering
		\includegraphics[width=\textwidth]{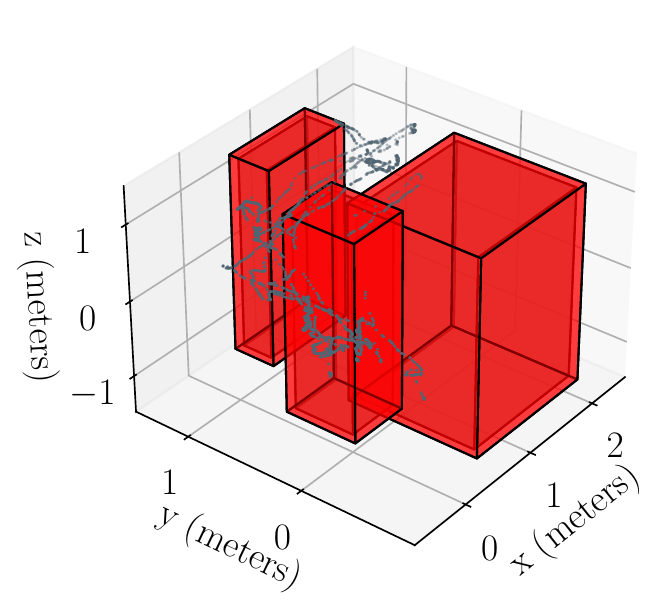}
		\caption{Imposing $\mathcal{W}^1_{\text{free}}$ on $X$}
		\label{fig:drone_3d}
	\end{subfigure}
	\begin{subfigure}[b]{0.625\columnwidth}
		\centering
		\includegraphics[width=\textwidth]{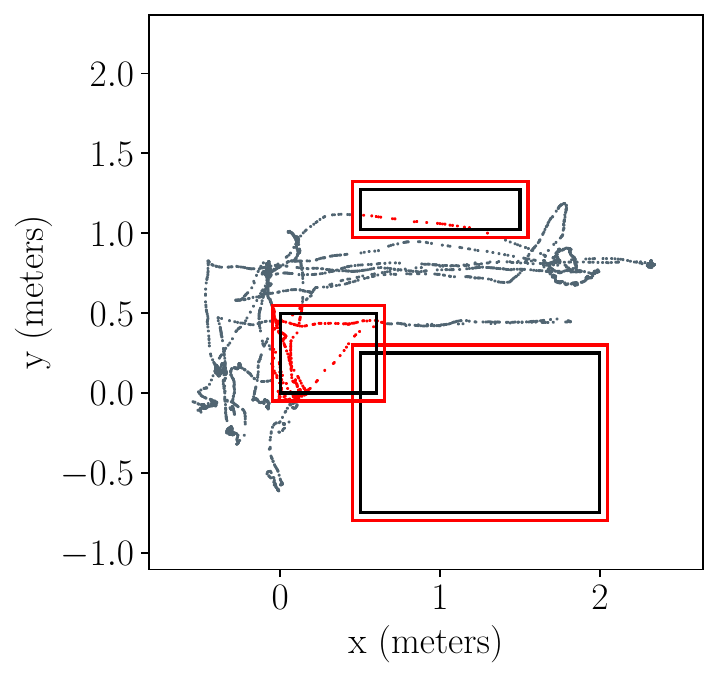}
		\caption{Imposing $\mathcal{W}^1_{\text{free}}$ on $X$}
		\label{fig:drone_2d}
	\end{subfigure}
	\begin{subfigure}[b]{0.625\columnwidth}
		\centering
		\includegraphics[width=\textwidth]{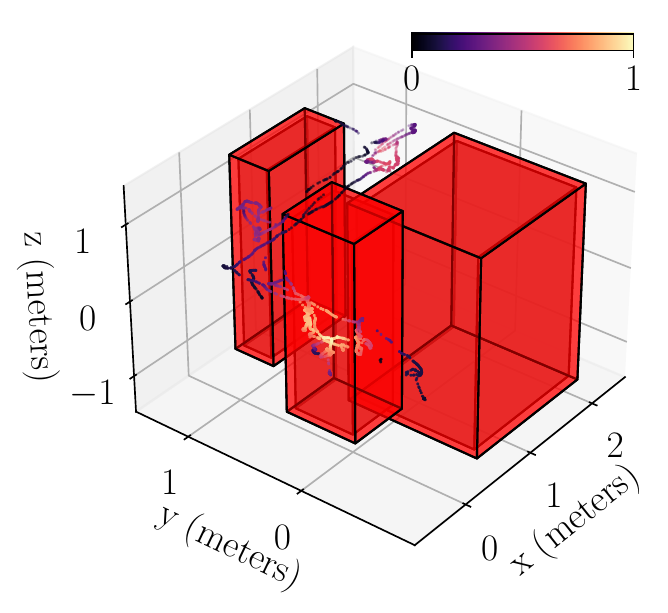}
		\caption{KDE based on data $X$}
		\label{fig:drone_kde_3d}
	\end{subfigure}
	\begin{subfigure}[b]{0.625\columnwidth}
		\centering
		\includegraphics[width=\textwidth]{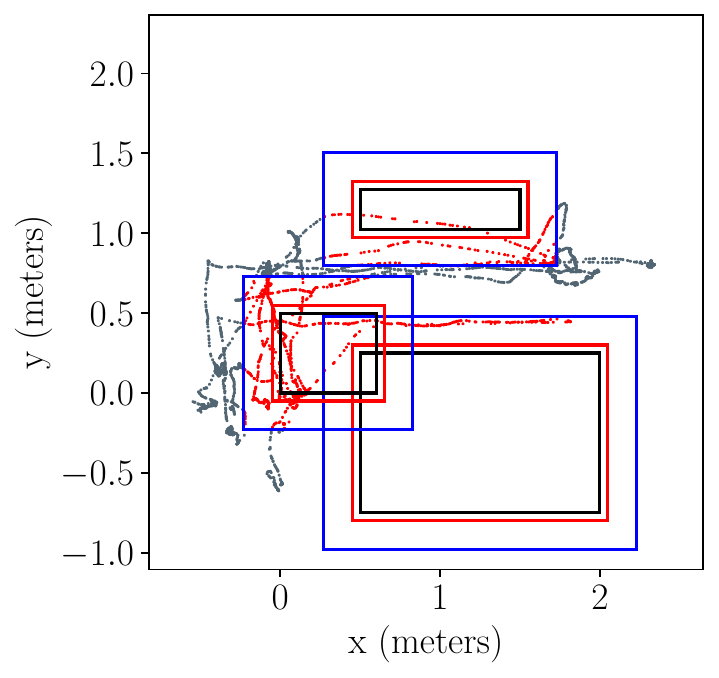}
		\caption{Erosion and dilation to create $\mathcal{W}^2_{\text{free}}$}
		\label{fig:drone_inflated_bw_2d}
	\end{subfigure}
	\begin{subfigure}[b]{0.625\columnwidth}
		\centering
		\includegraphics[width=\textwidth]{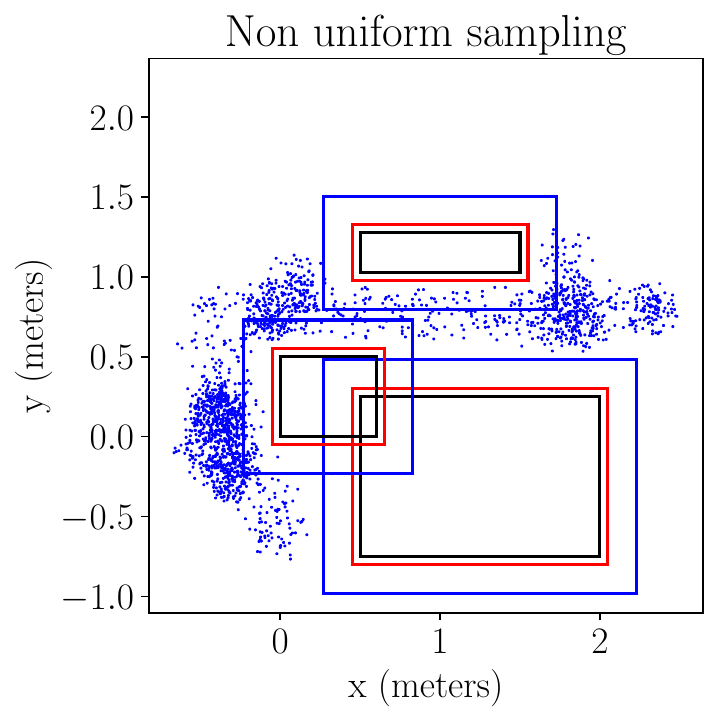}
		\caption{Sampling $\hat{f}_{\bar{X}}$ directly}
		\label{fig:drone_non_uniform_2d}
	\end{subfigure}
	\begin{subfigure}[b]{0.625\columnwidth}
		\centering
		\includegraphics[width=\textwidth]{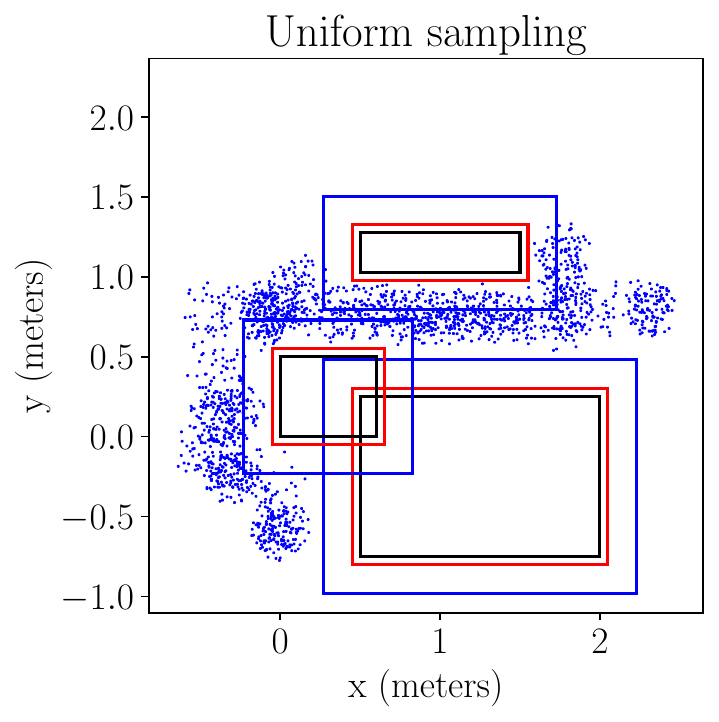}
		\caption{Sampling weighted $\hat{f}_{\bar{X}}$}
		\label{fig:drone_uniform_2d}
	\end{subfigure}
	\caption{Application of the proposed sampling method from Section \ref{sec:propsed_method}, with real life data from a drone inspecting a confined space~\citep{inspectdrone_data}. The resulting space ensures that the drone is able to maintain an adequate safety distance from the updated obstacle space. Fig.~\ref{fig:drone_2d} and \ref{fig:drone_inflated_bw_2d}-\ref{fig:drone_uniform_2d} are the 2D projections to the $xy$-plane of the space shown in Fig.~\ref{fig:drone_3d}, for $z=0$. Fig.~\ref{fig:drone_kde_3d} is a 3D visualisation of the untruncated KDE; the colour of each data point indicates the probability density. Note that due to limitations with the 3D engine used for plotting, the location of the data points in Fig.~\ref{fig:drone_3d} and Fig.~\ref{fig:drone_kde_3d} may be deceptive.}
	\label{fig:drone_study}
	\centering
	\begin{subfigure}[b]{0.625\columnwidth}
		\centering
		\includegraphics[width=\textwidth]{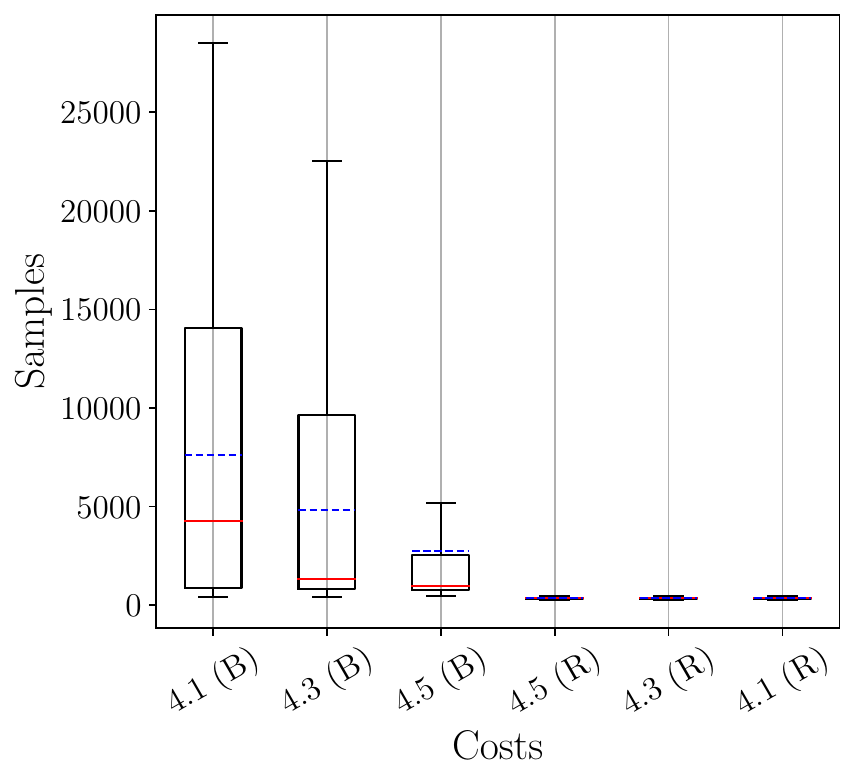}
		\caption{Number of samples}
		\label{fig:access_box_drone_samples}
	\end{subfigure}
	\begin{subfigure}[b]{0.625\columnwidth}
		\centering
		\includegraphics[width=\textwidth]{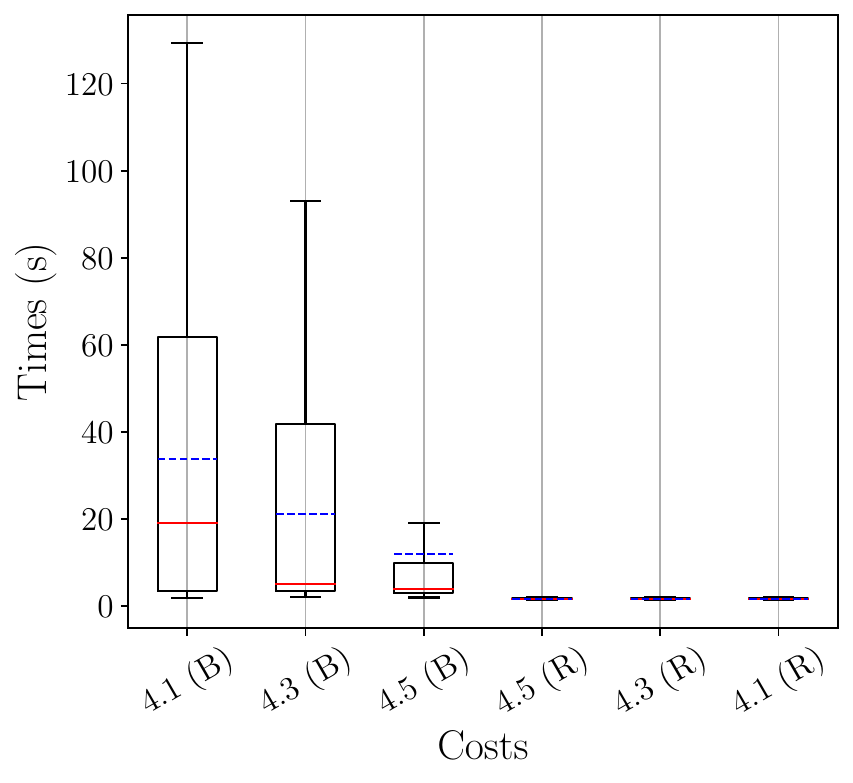}
		\caption{Computational time}
		\label{fig:access_box_drone_times}
	\end{subfigure}
	\begin{subfigure}[b]{0.625\columnwidth}
		\centering
		\includegraphics[width=\textwidth]{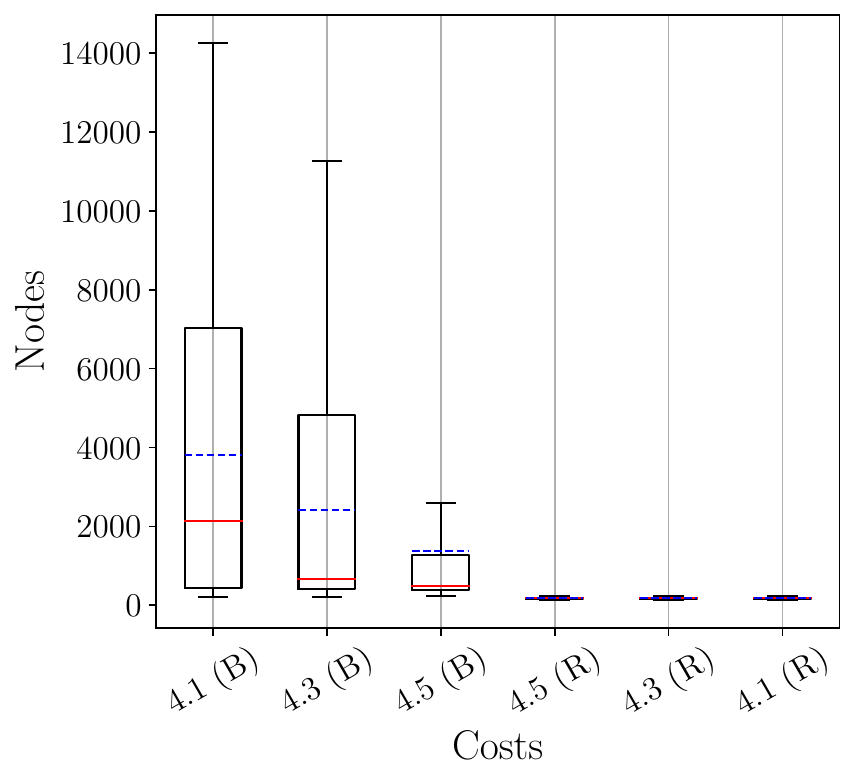}
		\caption{Graph nodes}
		\label{fig:access_box_drone_nodes}
	\end{subfigure}
	\caption{Comparison between the required number of samples, computational time and graph nodes required to achieve the specified solution cost ($c=4.1$, $c=4.3$ and $c=4.5$) for both the baseline sampling scheme (B) and proposed rejection-free sampling method (R), when solving the drone planning problem in Fig.~\ref{fig:drone_scenario_path} and Fig.~\ref{fig:drone_scenario_path_2d}. The blue line indicates the mean and solid red the median. Each method has been simulated 500 times, further statistical results can be found in Table \ref{tab:stopping_costs_drone}.}\label{fig:acces_box_drone_plots_drone}
\end{figure*}

Table \ref{tab:ASV_sampling_speeds} details 10,000 trials, where the two sampling schemes were tasked with generating 2500 feasible samples. As expected, since the proposed method directly samples the free space in a rejection-free manner, each trial spends exactly 2500 samples in order to generate the required 2500 feasible samples. The performance of the baseline sampler is highly correlated with the area ratio between the actual free space and its approximation. In general, the traditional uniform sampler results in a constant factor, relative to the area ratio, in terms of increased run time~\citep{bialkowski2013free,enevoldsen2021kde}. These results highlight the sampling efficiency to be gained from specialising the scheme to the particular problem at hand, as the strength of the method becomes more apparent when the baseline sampler is a poor approximation of the free space.  Table \ref{tab:ASV_feasible} contains the results of 10,000 trials in which the baseline and the proposed method are tasked with simply computing a feasible solution. On average, the proposed approach is 10\% slower at finding an initial solution; however, the achieved solution cost is 7\% lower. The results show that the graph density (number of nodes) for the proposed method is 65\% higher for this particular case study, which explains why the overall run time is increased. 

The true strength of the proposed method is captured when one wishes to find paths within the space of past behaviours, which are lower cost than some desired cost. Table \ref{tab:stopping_costs_ASV} shows 10,000 trials for both methods, where both planners solve the problem in Fig.~\ref{fig:scenario_ASV} until the reported solution cost is lower than 8500, 8250 and 8000. Fig.~\ref{fig:acces_box_plots} visualises the table data, where it is clear that to obtain a specific solution cost, the rejection-free method excels. The proposed method for all three cost scenarios outperform every baseline, whilst also significantly reducing the amount of standard deviation in the resulting solution.
As previously detailed, using the proposed method allows one to successfully capture the underlying navigational behaviour of past vessels. The sampling scheme guarantees that any generated sample is approximately uniform in the domain of the KDE, which describes the past behaviours of other vessels, and always falls within the defined free space. The approach allows one to include design specifications directly in the formulation of the sampling space, which in this case translate to maintaining a certain safety distance towards the shallow waters.
\begin{table*}
	\caption{Statistics related to computing feasible solutions to the drone planning problem 10,000 times (Fig.~\ref{fig:drone_scenario_path}). Comparison between the baseline (B) and proposed approach (R), where the percentage ($\Delta \%$) is computed as $(R-B)/B$, where lower numbers are better. Mean ($\mu$), median ($\bar{\mu}$) and standard deviation ($\sigma$).}
	\label{tab:drone_feasible}
	\begin{center}
		{\renewcommand{\arraystretch}{1.15}
			\resizebox{\textwidth}{!}{
				\begin{tabular}{rrrrcrrrcrrrcrrrr}
					\toprule
					& \multicolumn{3}{c}{Samples} & \phantom{a}&  \multicolumn{3}{c}{Times} & \phantom{a}&\multicolumn{3}{c}{Nodes} & \phantom{a}&\multicolumn{3}{c}{Cost}&\phantom{a}\\
					\cmidrule{2-4} \cmidrule{6-8} \cmidrule{10-12} \cmidrule{14-16}
					& \multicolumn{1}{c}{B} & \multicolumn{1}{c}{R}& \multicolumn{1}{c}{$\Delta \%$} && \multicolumn{1}{c}{B} & \multicolumn{1}{c}{R} & \multicolumn{1}{c}{$\Delta \%$}&& \multicolumn{1}{c}{B} & \multicolumn{1}{c}{R} & \multicolumn{1}{c}{$\Delta \%$}&& \multicolumn{1}{c}{B} & \multicolumn{1}{c}{R}& \multicolumn{1}{c}{$\Delta \%$}&\\
					\midrule
					$\mu$ & 1001.151 & 355.098 &-64.35\%&& 4.018 & 1.701 &-57.66\%&& 502.026 & 178.974 &-64.35\%&& 4.268 & 3.670 &-14.02\%& \\
					$\bar{\mu}$ & 871.000 & 351.000 &-59.45\%&& 3.555 & 1.691 &-52.44\%&& 436.500 & 177.000 &-59.45\%&& 4.194 & 3.664 &-12.63\%& \\
					$\sigma$ & 457.672 & 44.332 &-90.31\%&& 1.636 & 0.163 &-90.01\%&& 228.824 & 22.174 &-90.31\%&& 0.402 & 0.048 &-88.15\%& \\
					\hline
				\end{tabular}
			}
		}
	\end{center}

	\caption{Drone planning problem results (500 simulations), Fig.~\ref{fig:drone_scenario_path} and Fig.~\ref{fig:drone_scenario_path_2d}, the planner was terminated once the cost $c$ was less than 4.5, 4.3, and 4.1. Comparison between the baseline (B) and the proposed approach (R), where the percentage ($\Delta \%$) is computed as $(R-B)/B$, where lower numbers are better. Mean ($\mu$), median ($\bar{\mu}$) and standard deviation ($\sigma$).
	} 
	\label{tab:stopping_costs_drone}
	\begin{center}
		{\renewcommand{\arraystretch}{1.15}
			\resizebox{\textwidth}{!}{
				\begin{tabular}{rrrrcrrrcrrrcrrrr}
					\toprule
					& \multicolumn{3}{c}{Samples} & \phantom{a}&  \multicolumn{3}{c}{Times} & \phantom{a}&\multicolumn{3}{c}{Nodes} & \phantom{a}&\multicolumn{3}{c}{Cost}&\phantom{a}\\
					\cmidrule{2-4} \cmidrule{6-8} \cmidrule{10-12} \cmidrule{14-16}
					& \multicolumn{1}{c}{B} & \multicolumn{1}{c}{R}& \multicolumn{1}{c}{$\Delta \%$} && \multicolumn{1}{c}{B} & \multicolumn{1}{c}{R} & \multicolumn{1}{c}{$\Delta \%$}&& \multicolumn{1}{c}{B} & \multicolumn{1}{c}{R} & \multicolumn{1}{c}{$\Delta \%$}&& \multicolumn{1}{c}{B} & \multicolumn{1}{c}{R}& \multicolumn{1}{c}{$\Delta \%$}&\\
					\midrule
					\multicolumn{17}{c}{$c = 4.5$}\\
					$\mu$ & 2750.836 & 355.300 & -87.08\% & & 11.933 & 1.630 & -86.34\% & & 1376.764 & 179.096 & -86.99\% & & 4.162 & 3.669 & -11.86\% & \\
					$\bar\mu$ & 993.000 & 349.500 & -64.80\% & & 3.888 & 1.614 & -58.49\% & & 498.000 & 176.000 & -64.66\% & & 4.204 & 3.661 & -12.90\% & \\
					$\sigma$ & 3430.622 & 44.749 & -98.70\% & & 15.748 & 0.146 & -99.07\% & & 1715.143 & 22.374 & -98.70\% & & 0.255 & 0.048 & -81.11\% & \\
					\hline
					\multicolumn{17}{c}{$c = 4.3$}\\
					$\mu$ & 4815.408 & 354.742 & -92.63\% & & 21.178 & 1.643 & -92.24\% & & 2408.996 & 178.782 & -92.58\% & & 4.077 & 3.669 & -10.02\% & \\
					$\bar\mu$ & 1312.000 & 352.000 & -73.17\% & & 4.994 & 1.633 & -67.30\% & & 657.500 & 177.500 & -73.00\% & & 4.128 & 3.662 & -11.29\% & \\
					$\sigma$ & 5284.368 & 45.618 & -99.14\% & & 24.051 & 0.159 & -99.34\% & & 2641.956 & 22.800 & -99.14\% & & 0.194 & 0.048 & -75.32\% & \\
					\hline
					\multicolumn{17}{c}{$c = 4.1$}\\
					$\mu$ & 7623.002 & 354.372 & -95.35\% & & 33.850 & 1.647 & -95.13\% & & 3812.696 & 178.608 & -95.32\% & & 3.988 & 3.673 & -7.91\% & \\
					$\bar\mu$ & 4271.000 & 350.000 & -91.81\% & & 18.986 & 1.640 & -91.36\% & & 2137.000 & 176.500 & -91.74\% & & 4.026 & 3.667 & -8.93\% & \\
					$\sigma$ & 7680.157 & 45.064 & -99.41\% & & 34.972 & 0.155 & -99.56\% & & 3839.816 & 22.544 & -99.41\% & & 0.112 & 0.048 & -57.58\% & \\
					\hline
				\end{tabular}
			}
		}
	\end{center}
\end{table*}

\subsection{Navigating Autonomous Drones in Challenging Environments}
Drones are widely adopted in various industries, proving their worth in many highly automated or autonomous tasks. 
Their ability to move freely in 3D provides significant value when it comes to performing inspection, monitoring, and surveying tasks, as well as providing the ability to interact and reach places infeasible for humans. This particular case study was chosen to demonstrate the applicability of the proposed methods to higher-dimensional problems. This case study uses data from a real drone that has previously inspected a marine vessel, specifically ballast tank inspection \citep{inspectdrone_data}. The motion planning problem is then to compute a path for a new inspection task.

Fig.~\ref{fig:drone_study} details the steps of the proposed method. Given some inspection data collected in $\mathcal{W}^0_{\text{free}}$, the problem changes to contain new obstacles and a safety distance requirement, which generates $\mathcal{W}^1_{\text{free}}$ (Fig.~\ref{fig:drone_3d} and Fig.~\ref{fig:drone_2d}). Next, the bandwidth matrix $\mathbf{H} = 0.18\mathbf{I}$ and Epanechnikov kernel is selected and the KDE is computed and evaluated (Fig.~\ref{fig:drone_kde_3d}), where then the space $\mathcal{W}^1_{\text{free}}$ is dilated by the bandwidth, generating $\mathcal{W}^2_{\text{free}}$ (Fig.~\ref{fig:drone_inflated_bw_2d}). By imposing $\mathcal{W}^2_{\text{free}}$ on $X$, one can generate biased or approximately uniform samples as desired (Fig.~\ref{fig:drone_non_uniform_2d} and Fig.~\ref{fig:drone_uniform_2d}).

Compared to the ASV case study, when looking for a feasible solution to the drone scenario, the proposed method significantly outperforms the baseline. Table \ref{tab:drone_feasible} details 10,000 simulations of the presented planning problem in Fig.~\ref{fig:drone_scenario_path} and Fig.~\ref{fig:drone_scenario_path_2d}. Here, it is evident that as the dimensional complexity increases, the true benefits of a specialised sampling scheme appear. Since the particular historical data are a good representation of the desired path, the proposed method is faster (by 57\%) and produces lower cost paths at a significantly lower standard deviation (88\% lower).
Using the proposed method, the rate for obtaining lower-cost solutions for the drone navigation problem also heavily outperforms the baseline. Three cost ($c=4.5$, $c=4.3$ and $c=4.1$) thresholds were selected that lie within the range of what was produced by the feasible solutions of both methods (from Table \ref{tab:drone_feasible}). Due to the sheer amount of compute time required by the baseline, the number of trials was reduced to 500. Table \ref{tab:stopping_costs_drone} and Fig.~\ref{fig:acces_box_drone_plots_drone} detail the statistics related to obtaining a solution better than the cost thresholds mentioned above. Since the gathered historical data is very representative of the desired paths, the proposed method performs significantly better than the baseline. The baseline spends significant sampling effort exploring the entire space, where, instead, the rejection-free scheme hones its search. This results in greater speeds and overall consistency of the proposed method, since, as shown by the median and mean values of the baseline, there are a large number of outliers in the 500 trials that significantly impact the performance. By comparing the median values, the proposed method performs at least twice as well for $c=4.5$ and $c=4.3$, while for $c=4.1$ it performs 11 times better.

\section{Conclusions}
This paper proposed a new learning-based sampling strategy to generate biased or approximately uniform samples of a given free space, while guaranteeing that no rejection sampling is required. Kernel density estimation is adopted to achieve a probabilistic non-parametric description of regions of the workspace where solutions of the motion planning problem are likely to exist. The kernel and bandwidth of the estimated kernel density were exploited to provide a guarantee-by-construction that all future states generated by sampling the KDE fall within the boundaries of the free space. The method was illustrated in two case studies using real historical data for 2D and 3D workspaces, collected from surface vessels and drones, respectively. Each case study contained detailed steps explaining how to generate the sampling spaces. Finally, motion planning problems were solved for both the ASV and the drone case studies, and extensive Monte Carlo simulations were performed to gather statistical data, which detailed the strengths of the method.

\textcolor{\revOne}{
In comparison to the existing literature on learning-based sampling algorithms for SBMPs, which leans towards generating biased samples, the proposed method also allows the guaranteed rejection-free generation of approximately uniform samples of the free space. The importance of being able to generate uniform samples presents itself especially when basing a sampling scheme on historical data, as certain aspects of the data from past experiences may be greatly over-represented, since certain systems may have very repetitive operative or behavioural patterns over time. Using the proposed method for generating approximately uniform samples of the free space ensures that the spatial distribution of the past data is uniformly covered, when generating new samples, and thereby enables the SBMP to explore the free space without being biased towards the densest part of the data.}

It should be noted that the proposed method performs only as well as the available historical data. If the sought motion plan does not exist within the neighbourhood of the past behaviours, then the proposed method will perform worse than the baseline uniform sampling scheme.

Potential future work includes investigating whether or not pre-processing the data provides any performance improvement. It is speculated that for cases with highly application specific data, data augmentation or modification may improve the performance. Additionally, the statistical properties of the approximately uniform sampling scheme could be further investigated such that the uniformity of the sampled points could be accessed with respect to the underlying distribution.


\section*{CRediT Authorship Contribution Statement}
\textbf{Thomas T. Enevoldsen}: Conceptualization, Methodology, Software, Simulation, Validation, Investigation, Writing - Original Draft.
\textbf{Roberto Galeazzi}: Conceptualization, Writing - Review \& Editing, Supervision, Funding acquisition.

\section*{Acknowledgements}
The authors acknowledge the Danish Innovation Fund, The Danish Maritime Fund, Orients Fund and the Lauritzen Foundation for support to the ShippingLab Project, Grand Solutions 8090-00063B, which has sponsored this research. The electronic navigational charts have been provided by the Danish Geodata Agency. The drone data have been supplied by the Inspectrone project. Furthermore, the authors express their gratitude to Prof. Line K. H. Clemmensen for inspiring discussions on non-parametric estimation methods.


\bibliographystyle{cas-model2-names} 
\bibliography{cas-refs}

\end{document}